\title{On the impact of activation  and normalization in obtaining  isometric embeddings at initialization}
\author{%
  Amir Joudaki \\
  ETH Zurich \\
\texttt{amir.joudaki@inf.ethz.ch} \And
    Hadi Daneshmand \\
  MIT/LIDS-FODSI\\
  \texttt{dhadi@mit.edu}
  \And
  Francis Bach \\
  INRIA-ENS-PSL Paris \\
  \texttt{francis.bach@inria.fr}
}
\begin{document}

\maketitle

\begin{abstract}   
In this paper, we explore the structure of the penultimate Gram matrix in deep neural networks, which contains the pairwise inner products of outputs corresponding to a batch of inputs. In several architectures it has been observed that this Gram matrix becomes degenerate with depth at initialization, which dramatically slows training. Normalization layers, such as batch or layer normalization, play a pivotal role in preventing the rank collapse issue. Despite promising advances, the existing theoretical results do not extend to layer normalization, which is widely used in transformers, and can not quantitatively characterize the role of non-linear activations.  To bridge this gap, we prove that layer normalization, in conjunction with activation layers, biases the Gram matrix of a multilayer perceptron towards the identity matrix at an exponential rate with depth at initialization. We quantify this rate using the Hermite expansion of the activation function.
\end{abstract}
\blfootnote{Code is available at: \href{https://github.com/ajoudaki/deepnet-isometry}{\text{https://github.com/ajoudaki/deepnet-isometry}}}

\section{Introduction}
Optimization of deep neural networks is a challenging non-convex problem. Various components and optimization techniques have been developed over the last decades to make optimization feasible. Components such as activation functions~\citep{hendrycks2016gaussian}, normalization layers~\citep{ioffe2015batch}, and residual connections~\citep{he2016deep} have significantly influenced network training and have thus become the building blocks of neural networks. The practical success of these components has inspired extensive theoretical studies on the intricate role of weight initialization~\citep{saxe2013exact,daneshmand2021batch}, normalization~\citep{yang2019mean,kohler2019exponential,daneshmand2021batch,daneshmand2023efficient,joudaki2023bridging} and activation layers~\citep{pennington2018emergence,joudaki2023bridging}, on neural network training. For example, the training of large language models hinges on carefully utilizing residual connections, normalization layers, and tailored activations~\citep{vaswani2017attention,radford2018improving}. \citet{noci2022signal} highlight that the absence or improper utilization of these components can substantially slow training.  

To delve deeper into the influence of normalization and activation layers on training, one line of research has studied neural networks at initialization~\citep{pennington2018emergence,matthews2018gaussian,jacot2018neural,yang2019mean,li2022neural}. Several studies have focused on the Gram matrix, which captures the inner products of intermediate representations for a batch of inputs, revealing that Gram matrices become degenerate as the network depth increases~\citep{saxe2013exact,daneshmand2021batch,joudaki2023bridging}. These issue of degeneracy or rank deficiency has been observed in multilayer perceptrons (MLPs)~\citep{saxe2013exact,daneshmand2020batch}, convolutional networks~\citep{bjorck2018understanding}, and transformers~\citep{dong2021attention}, posing challenges to the training process~\citep {anagnostidissignal,pennington2018emergence,xiao2018dynamical}. Research indicates that normalization layers can act effectively to circumvent such Gram degeneracy, thereby improving training~\citep{yang2019mean,daneshmand2020batch,daneshmand2021batch,bjorck2018understanding}.

Analyses based of neural networks in the mean-field, i.e., the infinite width regime, have revealed profound insights about initialization by characterizing the local solutions to the Gram dynamics~\cite {yang2019mean,pennington2018emergence}. However, these results do not guarantee global convergence towards the mean-field solutions or depend on technical assumptions that are challenging to verify numerically~\citep{daneshmand2021batch, daneshmand2020batch,joudaki2023bridging}. Furthermore, all these theories primarily pertain to the network at initialization, where parameters are typically random and do not necessarily hold during or after the training. In this work, our objective is to bridge these existing gaps.

\paragraph{Contributions.}
Building upon existing literature that elucidates the spectral properties of the Gram matrix, we introduce the concept of isometry, which quantifies the similarity of the Gram matrix to the identity. Our initial theoretical finding demonstrates that isometry does not decrease under conditions of (batch and layer) normalization. This finding illuminates the bias of normalization layers towards isometry at various stages, namely initialization, during, and post-training.

We subsequently extend our analysis to explore the impact of non-linear activations on the isometry of intermediate representations in MLPs. Within the mean-field regime, we establish that non-linear activations incline the intermediate representations towards isometry at an exponential rate in depth. Our principal contribution is quantifying this rate by utilizing the Hermit polynomial expansion of activations. Intriguingly, our empirical experiments unveil a correlation between this rate and the convergence of stochastic gradient descent in MLPs equipped with layer normalization and standard activations used in practice.

\section{Related works}

A line of research investigates the interplay between signal propagation of the network and training. The existing literature postulates that in order to ensure fast training~\citep {schoenholz2017deep,poole2016exponential}, the network output must be sensitive to input changes, quantified by the spectrum of input-input Jacobean. This hypothesis is employed by~\cite {xiao2018dynamical} to train a 10,000-layer CNN using proper weight initialization without stabilizing components such as skip connection or normalization layers. \citet{he2023deep} demonstrate the critical role of the Jacobean spectra in large language models. In this paper, we analyze the spectrum of Gram matrices that connect to the spectral properties of input-output Jacobean. 

Mean-field theory has been extensively used to characterize Gram matrix dynamics in the limit of infinite width. In this setting, the Gram matrix is a fixed point of a recurrence equation that depends on the network architecture~\citep{schoenholz2017deep,yang2019mean,pennington2018emergence}. This fixed-point analysis can provide insights into the structure and spectral properties of Gram matrices in deep neural networks, thereby shedding light on the degeneracy of Gram matrices in networks~\citep{schoenholz2017deep,yang2019mean}. However, often fixed-points are not unique, and they can be degenerate or non-degenerate~\cite {yang2019mean}. In this paper, we establish a convergence rate to a non-degenerate fixed-point for a family of MLPs.

Batch normalization~\citep{ioffe2015batch} and layer normalization~\citep{ba2016layer} layers are widely used in deep neural networks (DNNs) to improve training. Batch normalization ensures that each feature within a layer across a mini-batch has zero mean and unit variance. In contrast, layer normalization centers and divides the output of each layer by its standard deviation. 
There have been numerous theoretical studies on the effects of batch normalization due to its popularity~\cite{yang2019mean,daneshmand2021batch,joudaki2023bridging}. While layer normalization has been the subject of increasing interest due to its application in transformers~\cite{xiong2020layer}, there are relatively fewer studies on its theoretical underpinnings. While we primarily focus on layer normalization, we define and characterize a property that is shared between batch and layer normalization. 

A broad spectrum of activation functions such as ReLU~\citep{fukushima1969visual}, GeLU~\citep{hendrycks2016gaussian}, SeLU~\citep{klambauer2017self}, and Hyperbolic Tangent, and Sigmoid, are used in DNNs. These functions have various computational and statistical consequences in deep learning. Despite this diversity, only the design of SeLU activation is theoretically motivated \citep{klambauer2017self}, while a broader theoretical understanding of activations remains elusive. To address this issue, we develop a theoretical framework to characterize the influence of a broad range of activations on intermediate representations in DNNs.

\section{Preliminaries}

\paragraph{Notation.}
Let $\ip{x}{y}$ be the inner product of vectors $x$ and $y,$ and $\norm{x}^2 = \ip{x}{x}$ the squared Euclidean norm of $x$. For a matrix $X,$ we write $X_{i\cdot}$ and $X_{\cdot i}$ for the $i$-th row and column of $X,$ respectively. We use ${W\sim N(\mu,\act^2)^{m\times n}}$ to indicate that $W$ is an $m\times n$ Gaussian matrix with \iid elements from $N(\mu,\act^2). $ We denote by $\0_n$ the zero vector of size $n.$ Given vector $x\in\R^n,$ $\Bar{x}$ denotes the arithmetic mean of $\frac1n\sum_{i=1}^n x_i.$ Lastly, $I_n$ is the identity matrix of size $n.$

\paragraph{Normalization layers.}
Let $\textsc{Ln}:\R^d\to \R^d$ and $\textsc{Bn}:\R^{d\times n}\to \R^{d\times n},$ denote batch normalization and layer normalization respectively. Table~\ref{tab:blocks} summarizes the definition of normalization layers.
In our notations, we separate centering from normalization in layer (batch) normalization. 
Similarly, we split batch normalization into centering and normalization steps in our definitions. This notation allows us to decouple the effect of normalization from the centering. However, we will not depart from the standard MLP architectures as we include centering in the network architecture defined below. 
% \end{remark}

\begin{table}[ht]
\begin{tabular}{|l| l||l| l|}
\hline
% \multicolumn{4}{|c|}{\textbf{Main building blocks.}} \\ \hline
Width & $d \in \N $ & Batch size & $n \in \N $ \\ 
Depth & $L \in \N $ & Input & $x\in\R^{d}$ \\ 
Input batch & $X\in\R^{d\times n}$ & Gaussian weights & $W^1,\dots,W^L\sim N(0,1)^{d\times d}$ \\ 
Activation & $\act:\R\to\R$ & Centering & $ x - \Bar{x}$ \\
\hline
\textbf{Layer Norm} & $\LN{x}=\frac{x}{\sqrt{\frac1d \sum_i^d x_i^2}}$ & \textbf{Batch Norm} & $\BN{X}_{ij}=\frac{X_{ij}}{\sqrt{\frac1n \sum_k^d X_{ik}^2}}$ \\
\hline
\end{tabular}
\vspace{.1cm}
\caption{ Building blocks we consider in this work.}
\label{tab:blocks}
\end{table}

\paragraph{MLP setup.} The subject of our analysis is an MLP with constant width $d$ across the layers and $L$ layers, which takes input $x\in\R^d$ and maps it to output $x^L\in\R^d,$ with hidden representations as
\begin{align}\label{eq:ln}
  \begin{cases}
  x^{\ell+1} = \frac{1}{\sqrt{d}}\LN{ h_{\ell} - \Bar{h}_\ell}, \quad h_\ell = \act(W^{\ell} x^{\ell}), & \ell=0,\dots,L-1\\
    x^0:= \frac{1}{\sqrt{d}}\LN{x - \Bar{x}} , & \text{input. }
  \end{cases}
\end{align}

While the original ordering of layer normalization and activation is different~\citep{ba2016layer}, \citet{xiong2020layer} show that the above ordering is more effective for large language models.

\paragraph{Gram matrices and isometry.}
% Gram matrices are useful tools to measure the similarity between features extracted by neural networks.
Given $n$ data points $\{x_i\}_{i\le n}\in \R^d$, the Gram matrix $G^\ell$ of the feature vectors $x_1^\ell,\dots,x_n^\ell \in\R^d$ at layer $\ell$ of the network is defined as
\begin{align} \label{eq:gram_matrix}
G^\ell := \left[\ip{x^\ell_i}{x_j^\ell}\right]_{i,j\le n}  , && \ell=0,1,\dots,L.
\end{align}

We define the notion of isometry to measure how much $G^\ell$ is close to a scaling factor of the identity matrix.  
\begin{definition}
Let $G$ be an $n \times n$ positive semi-definite matrix. We define the \emph{isometry $\Iso(G)$} of $G$ as the ratio of its normalized determinant to its normalized trace:
\begin{align}\label{eq:isometry}
\Iso(G) := \frac{\det(G)^{1/n}}{\frac1n\tr(G)}.
\end{align}
\end{definition}

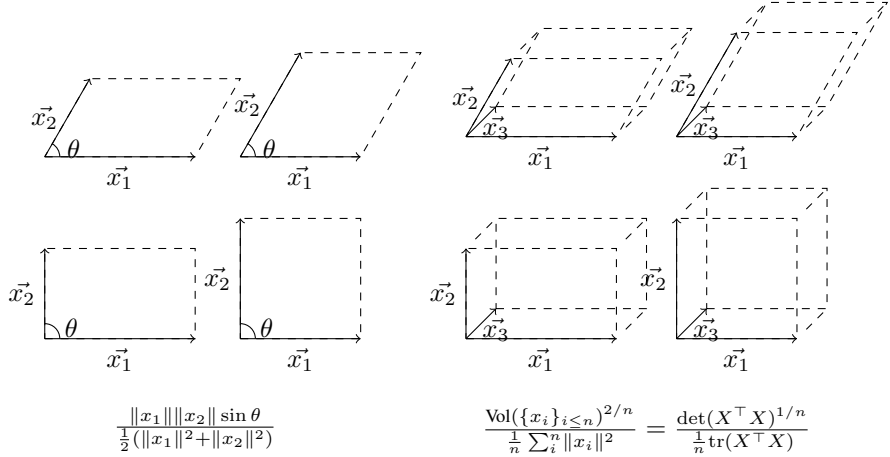
\begin{figure}[ht!]
    \centering
    \begin{tikzpicture}[scale=.40]

\draw (5,-3) node{$ \frac{\norm{x_1} \norm{x_2}\sin\theta}{\frac12(\norm{x_1}^2+\norm{x_2}^2)}$}
(20,-3) node{$\frac{\text{Vol}(\{x_i\}_{i\le n})^{2/n}}{\frac1n\sum_i^n \norm{x_i}^2} = \frac{\det(X^\top X)^{1/n}}{\frac1n \tr(X^\top X)}$};
    \foreach \i/\l in {0/0, 6.5/1} {
        \foreach \j/\angle in {0/0, 5.5/30} {
            \coordinate (O) at (\i,\j*1.1);
            \draw[dashed] (O) -- ++(0:5-\l) -- ++(90-\angle:3+\l) -- ++(180:5-\l) -- cycle;
            \draw[->] (O) -- ++(0:5-\l) node[midway,below] {$\vec{x_1}$};
            \draw[->] (O) -- ++(90-\angle:3+\l) node[midway,left] {$\vec{x_2}$};
            \draw (O) ++(.5,0) arc (0:90-\angle:0.5) node[midway,above,right] {$\theta$};
        }
    }
    \foreach \i/\l in {0/0, 7/1} {
        \foreach \j/\angle in {0/0, 5.6/30} {
            \coordinate (O) at (14+\i,\j*1.2);
            \draw[dashed] (O) -- ++(0:5-\l) node(A){} -- ++(90-\angle:3+\l) node(B){} -- ++(180:5-\l) node(C){} -- cycle;
            \draw[dashed] (A) -- ++(1,1);
            \draw[dashed] (B) -- ++(1,1);
            \draw[dashed] (C) -- ++(1,1);
            \draw[dashed] ($(O)+(1,1)$) -- ++(0:5-\l) -- ++(90-\angle:3+\l) -- ++(180:5-\l) -- cycle;

            \draw[->] (O) -- ++(0:5-\l) node[midway,below] {$\vec{x_1}$};
            \draw[->] (O) -- ++(90-\angle:3+\l) node[midway,left] {$\vec{x_2}$};
            \draw[->] (O) -- ++(1,1) node[below] {$\vec{x_3}$};
            % \draw (O) ++(.5,0) arc (0:90-\angle:0.5) node[midway,above,right] {$\theta$};
        }
    }
\end{tikzpicture}
    \caption{A geometric interpretation of isometry: higher volume, in the second row, corresponds to higher value for isometry.}
    \label{fig:isometry}
\end{figure}
  
$\Iso(G^\ell)$ is a scale-invariant quantity measuring the parallelepiped volume spanned by the feature vectors $x_1^\ell,\dots,x^\ell_n$. 
For example, consider two points on a plane $x_1, x_2 \in \mathbb{R}^2$ with lengths $a = |x_1|, b = |x_2|$ and angle $\theta = \angle(x_1, x_2)$. The ratio is given by $a b \sin(\theta) / (a^2 + b^2)$, which is maximized when $a=b$ and $\theta = \pi/2$. This relationship between volume and isometry is visually clear $n=2$ and $n=3$ feature vectors in Figure~\ref{fig:isometry}.

Remarkably, $\Iso(M)$ has the following properties (see Lemma~\ref{lem:isometry_basic} for formal statements and proofs):
\begin{itemize}
    \item[(i)] \emph{Scaling-invariant:} For all constants $c>0$, we have $\Iso(G) = \Iso(c G)$.%: $\Iso(c M) = \Iso(M)$
    \item[(ii)] \emph{Range:} $\Iso \in [0,1]$ where the boundaries $0$ and $1$ are achieved for to degenerate and identity matrices respectively.
\end{itemize}

We also define the \textbf{isometry gap} as negative logarithm of isometry $\ID(M).$ Based on these properties of isometry, isometry gap lies between $0$ and $\infty,$ with $0$ and $\infty$ indicating the perfect isometry (identity matrix) and degenerate matrices respectively. Isometry allows us to establish the inherent bias of normalization layers in the following section.

\section{Isometry bias of normalization}

This section is devoted to discussing the remarkable property of isometry in the context of normalization. We present a theorem that formalizes this property, followed by its geometric interpretation and implications.

\begin{theorem}\label{thm:isometry_normalization}
Given \(n\) samples \(\{x_i\}_{i\le n}\subset \R^d\setminus\{\0_d\}\), their projection onto the unit sphere \(\Norm{x}_i:=x_i/\norm{x_i},\) and their respective Gram matrices \(G\) and \(\Norm{G}\), the isometry obeys
\begin{align}
    \Iso\left(\Norm{G}\right) \ge \Iso(G)\left(1+\frac{\frac1n \sum_i^n(a_i-\bar{a})^2}{\bar{a}^2}\right),
\end{align}
where \(a_i:=\norm{x_i},\) and \(\bar{a}:=\frac1n\sum_i^n a_i\).
\end{theorem}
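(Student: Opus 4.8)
The plan is to observe that $G$ and $\Norm{G}$ differ only by a diagonal rescaling, so the claimed bound reduces to the classical arithmetic--geometric mean inequality applied to the norms $\norm{x_i}$. Set $a_i := \norm{x_i} > 0$ and let $D$ be the diagonal matrix with entries $a_1,\dots,a_n$. Since $\Norm{x}_i = x_i/a_i$, we get $G_{ij} = \ip{x_i}{x_j} = a_i a_j\, \ip{\Norm{x}_i}{\Norm{x}_j}$, i.e.\ $G = D\,\Norm{G}\,D$. Hence $\det(G) = \big(\prod_{i\le n} a_i^2\big)\det(\Norm{G})$; moreover $\Norm{G}$ has unit diagonal, so $\tr(\Norm{G}) = n$ while $\tr(G) = \sum_{i\le n} a_i^2$.

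If $\det(\Norm{G}) = 0$ then $\det(G) = 0$ as well, so $\Iso(G) = \Iso(\Norm{G}) = 0$ and the inequality is trivial; assume henceforth $\det(\Norm{G}) > 0$. Substituting the identities above into the definition of $\Iso$ in equation~\eqref{eq:isometry} gives
\[
\Iso(G) = \frac{\big(\prod_{i\le n} a_i^2\big)^{1/n}\,\det(\Norm{G})^{1/n}}{\frac1n\sum_{i\le n} a_i^2}, \qquad \Iso(\Norm{G}) = \det(\Norm{G})^{1/n},
\]
and dividing the second by the first,
\[
\frac{\Iso(\Norm{G})}{\Iso(G)} = \frac{\frac1n\sum_{i\le n} a_i^2}{\big(\prod_{i\le n} a_i^2\big)^{1/n}}.
\]

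It remains to lower bound this ratio. For the numerator, the identity $\frac1n\sum_{i\le n} a_i^2 = \bar{a}^2 + \frac1n\sum_{i\le n}(a_i - \bar{a})^2$ yields $\frac1n\sum_{i\le n} a_i^2 = \bar{a}^2\big(1 + \frac{\frac1n\sum_{i\le n}(a_i-\bar{a})^2}{\bar{a}^2}\big)$. For the denominator, $\big(\prod_{i\le n} a_i^2\big)^{1/n}$ is the square of the geometric mean of $a_1,\dots,a_n$, so the AM--GM inequality gives $\big(\prod_{i\le n} a_i^2\big)^{1/n} \le \bar{a}^2$. Combining the two estimates,
\[
\frac{\Iso(\Norm{G})}{\Iso(G)} \ge 1 + \frac{\frac1n\sum_{i\le n}(a_i-\bar{a})^2}{\bar{a}^2},
\]
which is exactly the claim.

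I expect no genuine obstacle here: the argument is essentially bookkeeping once one spots that per-sample normalization acts on the Gram matrix as conjugation by the diagonal matrix $D$, so that $\det$ contributes a geometric mean and $\tr$ an arithmetic mean of the squared norms $a_i^2$ — precisely the configuration that produces an AM--GM gap. The only points requiring a little care are the degenerate case $\det(\Norm{G}) = 0$ and keeping track of the $1/n$ exponents.
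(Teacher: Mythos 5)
Your proof is correct and follows essentially the same route as the paper: write $G = D\,\Norm{G}\,D$ for the diagonal matrix of norms, so that the determinant contributes the geometric mean of the $a_i^2$ and the trace the arithmetic mean, and then combine AM--GM with the decomposition $\frac1n\sum_i a_i^2 = \bar a^2 + \frac1n\sum_i(a_i-\bar a)^2$. If anything, your version is slightly more careful: you handle the degenerate case $\det(\Norm{G})=0$ explicitly and make the AM--GM step visible, whereas the paper's final display writes the bound as an equality and leaves the factor $\bar a^2/\bigl(\prod_i a_i\bigr)^{2/n}\ge 1$ implicit.
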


\paragraph{Geometric Interpretation.} Isometry can be considered a measurement of the ``volume'' of the parallelepiped formed by sample vectors, made scale and dimension-independent. The normalization process effectively equalizes the edge lengths of this parallelepiped, enhancing the overall "volume" or isometry, provided there is a variance in the sample norms.
Thus, projection onto the unit sphere $x_i \to x_i/ \|x_i\|$ makes the edge lengths of the parallelepiped equal while leaving the angles between its edges intact. From this geometric perspective, Theorem~\ref{thm:isometry_normalization} implies that among parallelepiped with similar angles between their edges and fixed total squared edge lengths, the one with equal edge lengths has the highest volume (and thereby isometry).

The proof of Theorem~\ref{thm:isometry_normalization} is intuitive for the special case of vectors forming a cube, where max volume is realized when all edge lengths are equal. This fact that maximum volume is achieved when edge lengths are equal can be deduced from the arithmetic vs geometric mean inequality. Strikingly, the proof for the general case is nearly as simple as this special case.
The high-level intuition behind the proof is that the determinant allows us to decouple the role of angles and edge lengths in volume formulation. This fact is evident for $n=2$ in Figure~\ref{fig:isometry}. Since normalization does not modify the angles between edges, the remainder of the proof falls back onto the case where edges form a cube. 

% \subsection{Proof of Thm~\ref{thm:isometry_normalization} }

\begin{proof}[\textbf{Proof of Theorem~\ref{thm:isometry_normalization}}] 
Define $D :=\diag(a_1 ,\dots, a_n).$ Observe that $G = D \Norm{G} D,$ implying $ \det(G) = \det(\Norm{G}) \det(D)^{2}.$ Because $\Norm{x_i}$'s have norm $1,$ diagonals of  Gram after normalization are constant $\Norm{G}_{ii}=1,$ implying $\frac1n\tr(\Norm{G})= 1.$ We have
\begin{align}
\frac{\Iso(\Norm{G})}{\Iso(G)}
&=
\frac{\frac1n\tr(G)}{\frac1n\tr(\Norm{G})}\frac{\det(\Norm{G})^{1/n}}{\det(G)^{1/n}}\\
&= \frac{\frac1n\sum_i^n a_i^2}{1}\frac{\det(\Norm{G})^{1/n}}{\det(\Norm{G})^{1/n}(\prod_i^n a_i^2)^{1/n}}\\
&= \frac{(\frac1n\sum_i a_i)^2}{(\prod_i^n a_i)^{2/n}}\frac{\frac1n\sum_i^n a_i^2}{(\frac1n\sum_i a_i)^2} && \det(D) = \prod_i^n a_i^2\\
& \geq  1\cdot \frac{\frac1n\sum_i^n a_i^2}{(\frac1n\sum_i^n a_i)^2} && \frac1n\sum_i^n a_i\ge (\prod_i^n a_i)^{\frac1n}\\
&= 1+\frac{\frac1n \sum_i^n(a_i-\bar{a})^2}{\bar{a}^2}, && \bar{a}:=\frac1n\sum_i^n a_i.
\end{align}
\end{proof}

Theorem~\ref{thm:isometry_normalization} further shows a subtle property of normalization: as long as there is some variation in the sample norms, i.e., $\|x_i\|$'s are not all equal, the post-normalization Gram has strictly higher isometry than the pre-normalization Gram matrix. It further quantifies the improvement in isometry as a function of variation of norms. Intuitively, terms $\Bar{a}$ and $\frac1n \sum_i^n(a_i-\bar{a})^2$ can be interpreted as the average and variance of sample norms $a_1,\dots,a_n.$ 
Thus, a higher variation in the norms $a_i$'s leads to a larger increase in isometry after normalization. 
% Note the isometry does not increase if and only if the norms are equal $a_1=\dots=a_n.$
 
\subsection{Implications for layer (and batch) normalization}
% Theorem~\ref{thm:isometry_normalization} allows us to characterize the isometry bias of layer and batch normalization. 
Theorem~\ref{thm:isometry_normalization} reveals insights into the biases introduced by layer and batch normalization in neural networks, particularly highlighting the improvement in isometry not just limited to initialization but also persistent through the training process.

\begin{corollary} \label{cor:ln}
Consider $n$ vectors before and after layer-normalization $\{x_i\}_{i\le n}\subset \R^d\setminus\{\0_d\}$ and $\{\Norm{x}_i\}_{i\le n}, \Norm{x_i}:=\LN{x_i}.$
Define their respective Gram matrices $G := [\ip{x_i}{x_j}]_{i,j\le n},$ and $\Norm{G}:=[\ip{\Norm{x}_i}{\Norm{x}_j}]_{i,j\le n}.$ We have:
\begin{align*}
    \Iso\left(\Norm{G}\right) \ge \Iso(G)\left(1+\frac{\frac1n \sum_i^n(a_i-\bar{a})^2}{\bar{a}^2}\right), && \text{where } a_i:=\norm{x_i},\bar{a}:=\frac1n\sum_i^n a_i. 
\end{align*}
% with equality holding if and only if columns have equal norms $a_1=\dots=a_n$.
\end{corollary}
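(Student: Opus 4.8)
The plan is to reduce Corollary~\ref{cor:ln} to Theorem~\ref{thm:isometry_normalization} via a one-line algebraic identity together with the scale-invariance of $\Iso$. First I would unpack the definition of layer normalization from Table~\ref{tab:blocks}: for any $x\in\R^d\setminus\{\0_d\}$,
\[
\LN{x} \;=\; \frac{x}{\sqrt{\tfrac1d\sum_{i}^d x_i^2}} \;=\; \frac{x}{\norm{x}/\sqrt d} \;=\; \sqrt d\,\frac{x}{\norm{x}},
\]
so that $\LN{x_i}$ is exactly $\sqrt d$ times the unit-sphere projection $u_i := x_i/\norm{x_i}$ appearing in Theorem~\ref{thm:isometry_normalization}. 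Consequently the post-normalization Gram matrix factors as $\Norm{G} = [\ip{\LN{x_i}}{\LN{x_j}}]_{i,j\le n} = d\cdot[\ip{u_i}{u_j}]_{i,j\le n}$; that is, it equals $d$ times the unit-sphere Gram matrix, call it $G_u$, of Theorem~\ref{thm:isometry_normalization}.

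Next I would invoke the scale-invariance property of $\Iso$ listed right after its definition (formalized in Lemma~\ref{lem:isometry_basic}): since $d>0$, we get $\Iso(\Norm{G}) = \Iso(d\,G_u) = \Iso(G_u)$. Applying Theorem~\ref{thm:isometry_normalization} to the samples $\{x_i\}_{i\le n}$ — whose unit-sphere Gram matrix is precisely $G_u$ and whose original Gram matrix is $G$ — then yields
\[
\Iso(\Norm{G}) \;=\; \Iso(G_u) \;\ge\; \Iso(G)\Big(1+\tfrac{\frac1n\sum_i^n (a_i-\bar a)^2}{\bar a^2}\Big),
\]
with $a_i = \norm{x_i}$ and $\bar a = \frac1n\sum_i^n a_i$, which is exactly the claimed bound.

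There is no substantive obstacle here: the corollary is an immediate specialization of the theorem, and the only points requiring care are the harmless notational overloading of $\Norm{\cdot}$ (it denotes $x/\norm{x}$ in the theorem but $\LN{x}$ in the corollary) and the scalar factor $\sqrt d$ this introduces, which disappears under $\Iso$ precisely because of scale-invariance. If one wished to avoid mentioning $d$ at all, one could equivalently note that $\LN{x}$ and $x/\norm{x}$ point in the same direction, so $\Norm{G}$ and the unit-sphere Gram matrix are positive scalar multiples of each other, and conclude identically.
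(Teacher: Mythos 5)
Your proposal is correct and matches the paper's own argument: the paper likewise observes that $\LN{x}$ is the unit-norm projection scaled by $\sqrt{d}$ (a projection onto the $\sqrt{d}$-sphere), so that by the scale-invariance of $\Iso$ the corollary follows directly from Theorem~\ref{thm:isometry_normalization}. Your write-up simply makes the factor-of-$d$ bookkeeping on the Gram matrix explicit, which is a harmless elaboration of the same reduction.
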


% Observe that we can view the layer normalization step in Table~\ref{tab:blocks} as a projection onto the $\sqrt{d}$-sphere, which is equivalent to the unit-norm projection in Theorem~\ref{thm:isometry_normalization} up to a constant scale factor. Since isometry is scale-invariant, this implies that corollary~\ref{cor:ln} follows directly from Theorem~\ref{thm:isometry_normalization}. 

What makes the above result distinct from related studies~\citep{daneshmand2021batch,daneshmand2020batch,yang2019mean} is that the increase in isometry is not limited to random initialization. Thus, layer normalization increases the isometry even during and after training. This calls for future research on the role of this inherent bias in enhanced optimization and generalization performance with batch normalization~\citep{ioffe2015batch,yang2019mean,lyu2022understanding,kohler2019exponential}.    

Despite the seemingly vast differences between layer normalization and batch normalization~\citep{lubana2021beyond}, the following corollary shows a link between these two different normalization techniques. 

\begin{corollary} \label{cor:bn}
 Given $n$ samples in a mini-batch before $X\in\R^{d\times n},$ and after normalization $\Norm{X}=\BN{X}$
 and define covariance matrices $C := X X^\top $ and $\Norm{C}:=X X^\top. $ We have:
\begin{align*}
    \Iso\left(\Norm{C}\right) \ge \Iso(C)\left(1+\frac{\frac1d \sum_i^d(a_i-\bar{a})^2}{\bar{a}^2}\right), && \text{where } a_i:=\norm{X_{i\cdot}},\bar{a}:=\frac1n\sum_{i=1}^d a_i. 
\end{align*}
\end{corollary}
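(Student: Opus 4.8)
The plan is to obtain Corollary~\ref{cor:bn} as a direct instance of Theorem~\ref{thm:isometry_normalization}, exactly in the way Corollary~\ref{cor:ln} was, but applying the theorem to the \emph{rows} of $X$ rather than to a batch of column vectors. Concretely, I would regard the $d$ rows $X_{1\cdot},\dots,X_{d\cdot}\in\R^{n}$ as the ``$n$ samples'' of Theorem~\ref{thm:isometry_normalization}; this swaps the roles of the two dimensions, so the theorem's batch size is played here by the width $d$ and the theorem's ambient dimension is played by the batch size $n$. I would assume $X_{i\cdot}\neq\0_n$ for every $i$, which is exactly the regime where $\BN{X}$ is defined; if some row vanishes then $C=XX^\top$ is singular, so $\Iso(C)=0$ and the inequality is trivially true.

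First I would match up the two matrices. The $(i,j)$ entry of $C=XX^\top$ equals $\ip{X_{i\cdot}}{X_{j\cdot}}$, so $C$ is precisely the Gram matrix of the row vectors and plays the role of $G$ in the theorem. Reading $\BN{}$ off Table~\ref{tab:blocks}, the $i$-th row of $\Norm{X}=\BN{X}$ is $X_{i\cdot}/\sqrt{\frac1n\sum_{k}X_{ik}^2}=\sqrt{n}\,X_{i\cdot}/\norm{X_{i\cdot}}$, i.e. the unit-norm projection of the row scaled by the constant $\sqrt n$. Hence the matrix the corollary writes as $\Norm{C}$ --- which should be read as $\Norm{X}\Norm{X}^{\top}$, not $XX^\top$ --- equals $n$ times the Gram matrix of the unit-normalized rows $X_{i\cdot}/\norm{X_{i\cdot}}$.

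Next I would use scale-invariance of $\Iso$ from Lemma~\ref{lem:isometry_basic} to discard the factor $n$, so that $\Iso(\Norm{C})$ is the isometry of the Gram matrix of the unit-normalized rows. Applying Theorem~\ref{thm:isometry_normalization} to the $d$ samples $\{X_{i\cdot}\}_{i\le d}$, with $a_i=\norm{X_{i\cdot}}$ and $\bar a=\frac1d\sum_{i=1}^{d}a_i$, then gives
\[
\Iso\!\left(\Norm{C}\right)\;\ge\;\Iso(C)\left(1+\frac{\frac1d\sum_{i=1}^{d}(a_i-\bar a)^2}{\bar a^{2}}\right),
\]
which is the claim (with the obvious corrections that $\Norm{C}=\Norm{X}\Norm{X}^\top$ and that $\bar a$ is the mean over the $d$ rows, $\bar a=\frac1d\sum_{i=1}^d a_i$).

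I do not expect a real obstacle: the whole content is that batch normalization is unit-norm projection applied row-wise, so the only work is bookkeeping --- tracking which dimension plays the role of ``$n$'' in Theorem~\ref{thm:isometry_normalization}, absorbing the $\sqrt n$ scale via scale-invariance, and noting the nonvanishing-rows hypothesis. The one point worth a sentence is that, like the ``Centering'' column of Table~\ref{tab:blocks}, the corollary is stated for the normalization step in isolation; to get the analogous bound for full batch normalization one would first subtract each row's mean from $X$ and then run the identical argument on the centered matrix.
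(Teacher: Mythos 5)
Your proposal is correct and matches the paper's intended argument: the paper gives no separate proof of Corollary~\ref{cor:bn}, treating it (like Corollary~\ref{cor:ln}) as a direct instance of Theorem~\ref{thm:isometry_normalization}, here applied to the rows of $X$ with the $\sqrt{n}$ scale absorbed by scale-invariance of $\Iso$. Your additional observations --- that $\Norm{C}$ must be read as $\Norm{X}\Norm{X}^\top$, that $\bar a$ should average over the $d$ rows, and that zero rows make the bound trivial --- are accurate corrections of typos in the statement rather than gaps in the argument.
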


Gram matrices of networks with batch normalization have been the subject of many previous studies at network initialization: it has been postulated that BN prevents rank collapse issue~\citep{daneshmand2020batch} and that it orthogonalizes the representations~\citep{daneshmand2021batch}, and that it imposes isometry~\citep{yang2019mean}. It is straightforward to verify that orthogonal matrices have the maximum isometry. Thus, the increase in isometry links to the orthogonalization of hidden representation characterized by \cite{daneshmand2021batch}. While all previous results heavily rely on Gaussian random weights to establish this inherent bias, Corollary~\ref{cor:bn} is not limited to random weights.  

\subsection{Empirical validation of Corollary~\ref{cor:ln} in an MLP setup}
We can validate Corollary~\ref{cor:ln} by tracking the isometry of various layers of an MLP with layer normalization. 
Figure~\ref{fig:iso_gap_training} shows the isometry of intermediate representations in an MLP with layer normalization and hyperbolic tangent on \texttt{CIFAR10} dataset. Shades in the figure mark layers illustrate that the isometry of the Gram matrix is non-decreasing after each layer normalization layer. We can see in Figure~\ref{fig:iso_gap_training} that both before (left) and after training (right), the normalization layers maintain or improve isometry. To highlight the fact that the claims of Corollary~\ref{cor:ln} holds at all times and not only for initialization, Figure~\ref{fig:iso_gap_training} tracks isometry of various layers both at initialization (left) and after training (right). This can be verified by the fact that isometry in the normalization layers (shaded blue) is either stable or increased, which validates Corollary~\ref{cor:ln}. 

\begin{figure}
\includegraphics[width=1\textwidth]{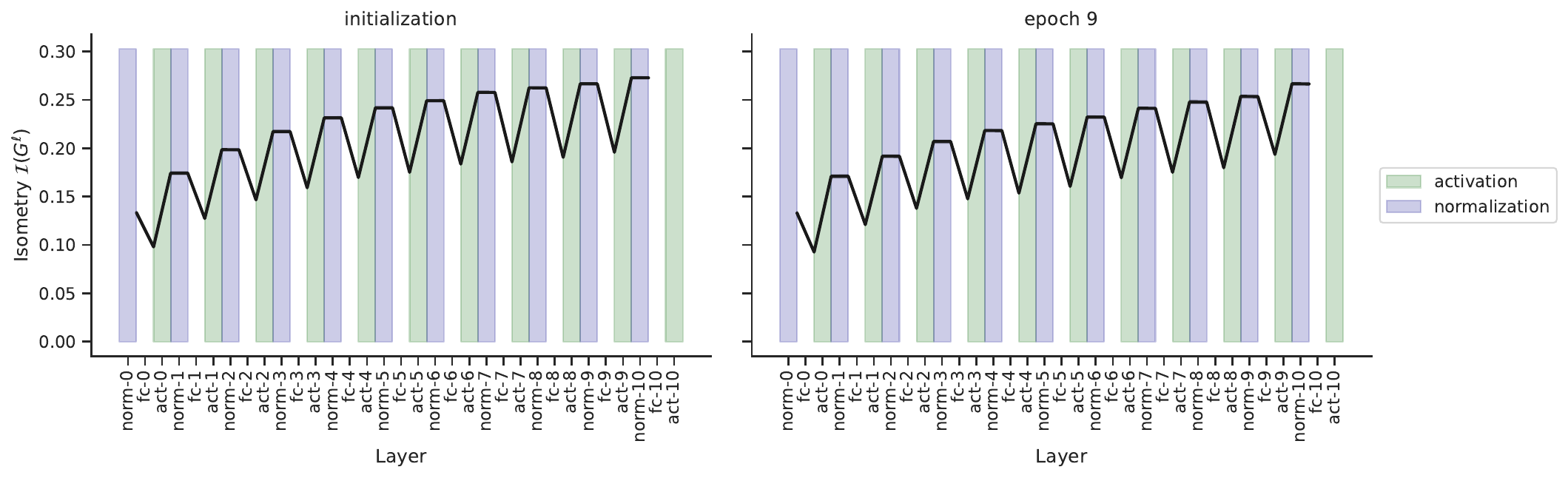}
\caption{\footnotesize \textbf{Validation of Corollary~\ref{cor:ln}} Isometry (y-axis) vs different layer of an MLP: Normalization layers (shaded blue) across all layers and configurations maintain or increase isometry both before (left) and after (right) training, validating Corollary~\ref{cor:ln}. \textit{Hyper parameters:} activation: $\tanh$, depth: $10$, width: $1000$, batch-size: $512,$ training SGD on training set of \texttt{CIFAR10}. with $lr=0.01.$. Layer names are encoded as type-index, where type can be \texttt{fc}: fully connected, \texttt{norm}: LayerNorm, and \texttt{act}: activation. }
\label{fig:iso_gap_training}
\end{figure}

\section{Isometry bias of non-linear activation functions}
So far, our focus was individual normalization layers. 
In this section, we extend our analysis to all layers when weights are \textbf{Gaussian}. Inspired by the isometry bias of normalization, we analyze how other components of neural networks influence the isometry of the Gram matrices, denoted by $\Iso(G^\ell)$ with a specific focus on non-linear activations. 

% So far, we have studied the influence of individual normalization layers. In the present section, we investigate how non-linear activations influence the hidden layer representations of an MLP. Building upon the notion of isometry introduced before. We will focus on the isometry of the Gram matrix in the hidden layers, as quantified by $\Iso(G^\ell).$ 
\subsection{Hermite expansion of activation functions}
Analyzing Gram matrix dynamics for non-linear activation is challenging since even small modifications in the scale or shape of activations can lead to significant changes in the representations. A powerful tool to analyze activations is to express activations in the Hermite polynomial basis.
Inspired by previous successful applications of Hermite polynomials in neural network analysis~\citep{daniely2016toward,yang2019scaling}, we explore their impact on the isometry of activation functions.

\begin{definition}
Hermite polynomial of degree $k,$ denoted by $\He_k(x),$ is defined as
\begin{align*}
% \He_k(x) := ((2\pi)^{1/2}k!)^{-1/2} (-1)^k e^{x^2/2} \frac{d^k}{dx^k} e^{-x^2/2},\\
\He_k(x) :=(k!)^{-\frac12} (-1)^k e^{\frac{x^2}{2}} \frac{d^k}{dx^k} e^{-\frac{x^2}{2}}.
\end{align*}
\end{definition}
All square-integrable function $\sigma$ with respect to the Gaussian kernel, which obeys $\int_{-\infty}^\infty \act(x)^2 e^{-x^2/2}dx < \infty$, can be expressed as a linear combination of Hermite polynomials as $\sigma(x) = \sum_{k} c_k \He_k(x)$ with (see section~\ref{sec:main_theorem} for more details):
\begin{align*}
c_k := \E_{x\sim \Normal(0,1)} [\act(x)\He_k(x)].
% c_k:= \int_x \act(x) \He_k(x) e^{-x^2/2} dx.
\end{align*}
The subsequent section will discuss how to leverage the Hermite expression of the activation to analyze the dynamics of Gram matrix isometry. 

\subsection{Non-linear activations bias Gram dynamics towards isometry}
In this section, we analyze how  $\Iso(G^\ell)$ changes with $\ell$. We use the mean-field dynamic of Gram matrices subject of previous studies  \citep{yang2019mean,schoenholz2017deep,poole2016exponential}. 
The mean-field dynamics of Gram matrices is given by
\begin{align}\label{eq:mean_field_gram} 
    G^{\ell+1}_* = \E_{h \sim N(0,G^\ell_*)} \left[   \phi(\act(h)) \phi(\act(h))^\top \right],  && \text{ where } [\phi(a)]_i:= (a_i- \E a_i )/\sqrt{\Var{a_i}}.
\end{align}
This equation gives the expected Gram matrix for layer $\ell+1$, based on the Gram matrix $G^\ell_*$ from the previous layer, and $\phi$ is mean-field regime counterpart for layer normalization operator (see section~\ref{sec:main_theorem} for more details). The sequence $G^\ell_*$ approximates the dynamics of $G^\ell,$ and this correspondence becomes exact for infinitely wide MLPs. In the rest of this section, we analyze the above dynamical system.  Our theory relies on the notion of \emph{isometry strength} of the activation function, defined next. 
\begin{definition}[Isometry strength] \label{def:iso_str}
    Given activation $\act$ with Hermite expansion $\{c_k\}_{k\ge 0},$ define its \emph{isometry strength $\beta_\sigma$} as:
    \begin{align}\label{eq:beta}
        \beta_\sigma:= 2 - \frac{c_1^2}{\sum_{k=1}^\infty c_k^2}
    \end{align}
\end{definition}

We can readily check from the definition that isometry strength $\beta_\sigma$ has the following basic properties: (i) it ranges between $1$ and $2,$ and (ii) it is $1$ if and only if the activation is a linear function. Table~\ref{tab:activation} presents the isometry strength of certain activations in closed form. With this definition, we can finally analyze Gram matrix mean-field dynamics. 
Interestingly, the negative log of isometry $\ID(G^{\ell}_*) $ can serve as a Lyapunov function for the above dynamics. The following theorem proves non-linear activations also impose isometry similar to normalization layers.

\begin{theorem}\label{thm:gram_isometry_gap}
Let $\act$ be an activation function with a Hermite expansion and a non-linearity strength~$\beta_\sigma,$ (see equation~\eqref{eq:beta}). Given non-degenerate input Gram matrix $G^0_*,$ then for sufficiently large layer $\ell \gtrsim \beta_\sigma^{-1}(-n\log\Iso(G_*^0)+\log(4n))$, we have
\begin{align}
      -\log\Iso(G^\ell_*) \le \exp(-\ell\log\beta_\sigma - n\log\Iso(G^0_*) 
 + \log(4n)).
\end{align}
\end{theorem}
% Let $\act$ be an activation function with a Hermite expansion and a isometry strength~$\beta_\sigma,$ (see equation~\eqref{eq:beta}). Given the input samples are not aligned, there is a constant $C > 0$ such that we have
% \begin{align}
% -\log\Iso(G^\ell_*)\le C n \beta_\sigma^{-\ell}.
% % -\log\Iso(G^\ell)\le 4n\det(G^0)^{-1}\beta_\sigma^{-\ell}.
%       % -\log\Iso(G^\ell_*) \le \exp(-\ell\log\beta_\sigma - n\log\Iso(G^0_*)+\log(4n)).
% \end{align}
% \end{theorem}
Note that the condition on input being non-degenerate is essential to reach isometry through depth. For example, if the input batch contains a duplicated sample, their corresponding representations across all layers will remain duplicated, implying that all $G^\ell_*$'s will be degenerate. 
 
Theorem~\ref{thm:gram_isometry_gap} reveals the importance of non-linear Hermite coefficients ($c_k, k\ge 2$) in activation function to ensure $\beta_\sigma > 1$ and obtain isometry in depth. This connection between $\beta_\sigma$ and isometry is the rationale for referring to $\beta_\sigma$ as the isometry strength. This constant can be computed in closed form for various activations, as shown in Table~\ref{tab:activation}, and for all other activations, it can be computed numerically by sampling.

\begin{table}[ht]
\begin{center}
\begin{tabular}{lllllll}
\hline
 & $\He_1(x)$ & $\He_2(x)$ & Sine & Exponential & Step & ReLU \\
\hline
$\sigma$ & $x$ & $x^2-1$ & $\sin(x)$ & $\exp(x-2)$ & $\1[x>0]$ & $\max(x,0)$ \\
$\beta_\sigma$ & 1 & 2 & $2-\frac{2e}{e^2 - 1}$ & $2-\frac{1}{e-1}$ & $2-\frac{2}{\pi}$ & $\frac{3\pi-4}{2\pi-2}$ \\
\hline \\
\end{tabular}
\caption{Isometry strength $\beta_\sigma$ (see Definition~\ref{def:iso_str}) for various activation functions.}
\label{tab:activation}
\end{center}
\end{table}

 \begin{figure}[htb!]
\centering
\includegraphics[width=1\textwidth]{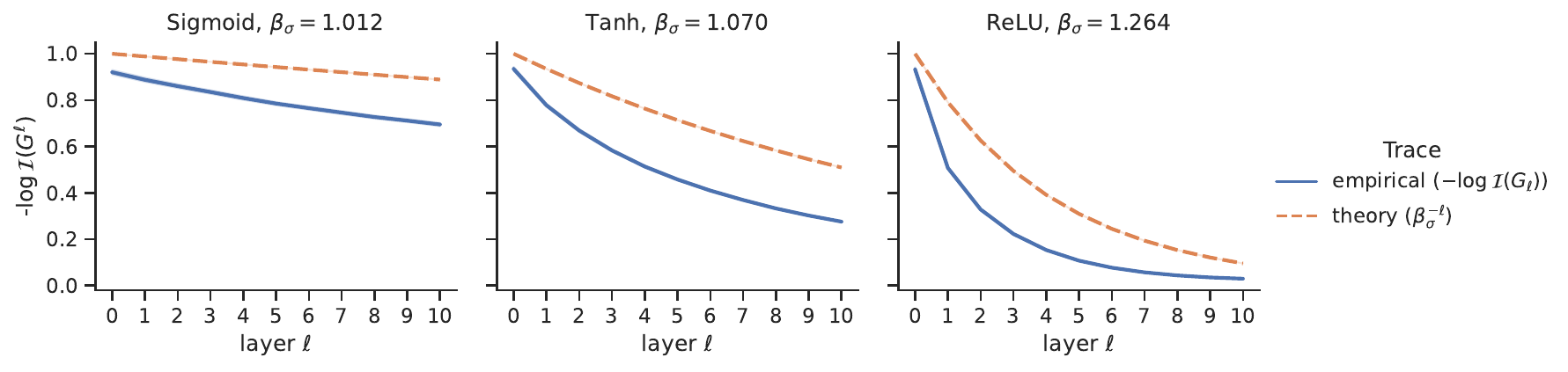}
\vspace{-0.3cm}
\caption{\footnotesize \textbf{Validation of Theorem~\ref{thm:gram_isometry_gap}}. Solid blue traces show the isometry of MLP with $n=100, $ $d=5000,$ and various activations $\act$. Solid lines show the average of $\# 10$ independent runs. The dashed traces are theoretical upper bounds given in Theorem~\ref{thm:gram_isometry_gap}, with constant $C=1$.}
\label{fig:iso_gap_activation}
\end{figure}

Figure~\ref{fig:iso_gap_activation} compares the established bound on the isometry gap with those observed in practice, i.e. $G^\ell$, for three activations. We observe $\beta_\sigma$ predicts the decay rate in isometry of Gram matrices $G^\ell$. While so far, we have only discussed the direct results of our theory for isometry of Gram matrices, in the next section, we will discuss other insights from the above analysis. % We will experimentally validate the link between isometry strength and training. 

\section{Implications of our theory}\label{sec:implications}
In this section, we elucidate the implications of our theory, beginning with insights into layer normalization through the Hermit expansion of activation functions, followed by an examination of its impact on training. 

Layer normalization primarily involves two steps: (i) centering and (ii) normalizing the norms. Through the Hermit expansion of activation functions, we unravel the underlying intricacies of these components and propose alternatives based on insights from the Hermit expansion.

\paragraph{Experimental Setup.}
Our experiments utilize MLPs with layer normalization and various activation functions for the task of image classification on the CIFAR10 dataset~\citep{cifar10}. For training, we use stochastic gradient descent (SGD) with a fixed step size of $0.01.$ Unless stated otherwise, the MLP has constant width of $d=1000$ is maintained across hidden layers, and a batch size of $n=10$. Throughout this section, $a^\ell$ and $x^\ell$ respectively denote the post-activation and post-normalization vector for layer $\ell.$

\subsection{Centering and Hermit expansion} \label{sec:mean}
One of the key insights of our mean-field theory is that the centering step in layer normalization is crucial in obtaining isometry. In the mean-field regime, pre-activations follow standard Gaussian distributions, and thus the average post-activation $\Bar{a^\ell}=\frac1d\sum_i^d a_i^\ell $ will converge to their expectation $\Bar{a^\ell} = \E_{z\sim\Normal(0,1)} [\act(X)] = c_0.$ This insight suggests an alternative way of obtaining isometry by  explicitly removing the offset term from activation, i.e., replacing activation $\act(x)$ by $\act(x)-c_0.$ Strikingly, our experiments presented in Figure~\ref{fig:layer_vs_meanfield_centering} indicate that such replacement can also impose isometry. This result provides novel insights into the role of centering in layer normalization.

\begin{figure}[ht]
\centering
\includegraphics[width=1\textwidth]{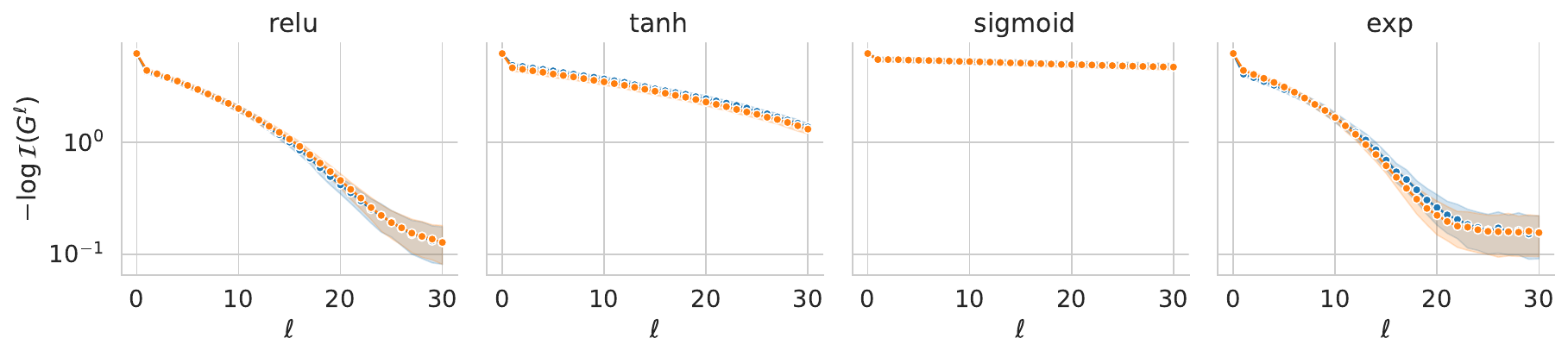}
\vspace{-.6cm}
\caption{\footnotesize Layer vs. mean-field centering in obtaining isometry.
Layer centering (orange): $x^{\ell+1} = \LN{a^\ell - \Bar{a^\ell}}. $ Mean-field centering (blue): $x^{\ell+1} = \LN{a^\ell - c_0}$. }
\label{fig:layer_vs_meanfield_centering}
\end{figure}

\subsection{Normalization and Hermit expansion} \label{sec:normalization}
Theorem~\ref{thm:gram_isometry_gap} further reveals the importance of normalization of norms in addition to centering to achieve isometry. Figure~\ref{fig:with_without_normalization} underlines the importance of the normalization for different activations, where we observe the isometry gap may increase without normalization. Similar to our mean-field analysis of centering, the factor $\frac1d \sum_{i=1}^d (a^\ell_i - \Bar{a^\ell})^2$ in layer normalization converges to variance $\text{var}_{z\sim N(0,1)} (\act(z))= \sum_{k=1}^\infty c_k^2=:\bar\act(1).$ Thus, as the width increases, the layer normalization operator $\LN{a^\ell - \Bar{a^\ell}}$ will converge to $(a^\ell - c_0)/\sqrt{\bar\act(1)}.$ Figure~\ref{fig:layer_vs_meanfield_normalization} demonstrates that the constant scaling $1/\sqrt{\bar\act(1)},$ achieves comparable isometry to layer normalization for hyperbolic tangent and sigmoid and ReLU, while it is not effective for $\exp$ function. This observation calls for future research on the link between normalization and activation in deep neural networks. 

\begin{figure}[ht]
\centering
\includegraphics[width=1\textwidth]{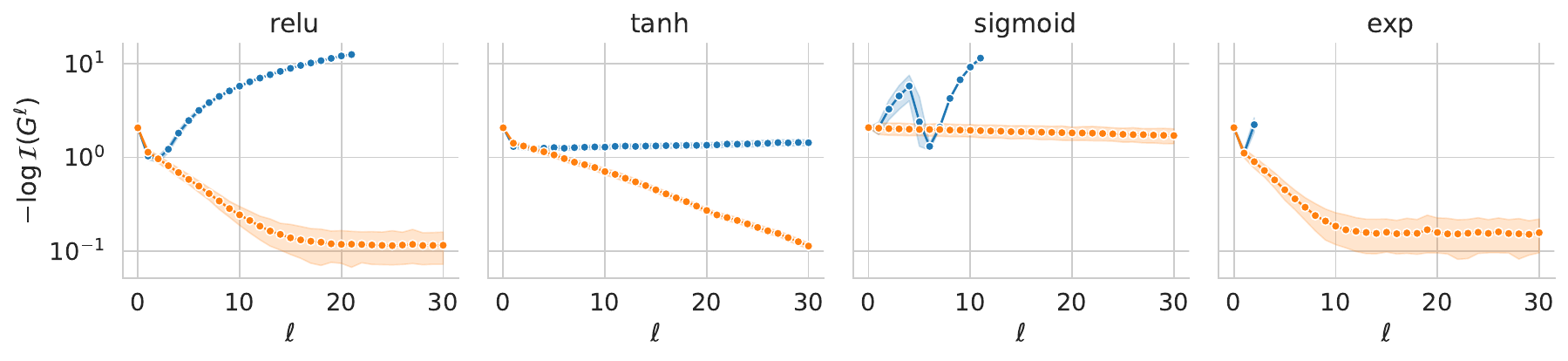}
\vspace{-.6cm}
\caption{\footnotesize Comparing no normalization (blue) $x^{\ell+1} = (a^\ell - c_0)$ with layer normalization (orange) ${x^{\ell+1} = \LN{a^\ell - c_0}}$ in obtaining isometry. }
\label{fig:with_without_normalization}
\end{figure}

\begin{figure}[ht]
\centering
\includegraphics[width=1\textwidth]{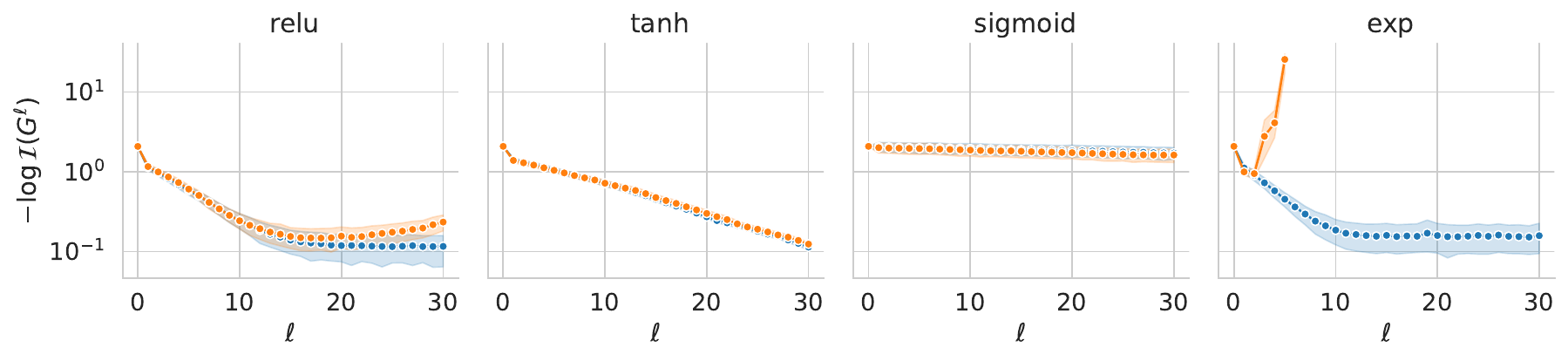}
\vspace{-.5cm}
\caption{\footnotesize Comparing layer normalization (blue) $x^{\ell+1} = \LN{a^\ell - c_0}$ vs. mean-field normalization (orange) $x^{\ell+1} = \frac{a^\ell - c_0}{\sqrt{ \bar\act(1)}}$ in obtaining isometry.}
\label{fig:layer_vs_meanfield_normalization}
\end{figure}

\subsection{Isometry strength correlates with SGD convergence rate in shallow MLPs.}
Besides the direct consequences of our theory, we observe a striking correlation between the convergence of SGD and isometry strength in a specific range of neural network hyper-parameters. Figure~\ref{fig:beta0_train_loss} shows the convergence of SGD is faster for activations with a significantly larger isometry strength $\beta_\sigma$ (see Definition~\ref{def:iso_str}) for \emph{shallow} MLPs, e.g., with $10$ layers or less. We can speculate that this correlation reflects the input-output sensitivity of the networks with higher non-linearity. Surprisingly, this correlation does not extend to deeper networks. This discrepancy between shallow and deep networks regarding SGD convergence may be due to the issue of gradient explosion studied by \cite{meterez2023towards}. This finding suggests multiple avenues for future research.

\begin{figure}[ht!]
\centering
\includegraphics[width=.5
\textwidth]{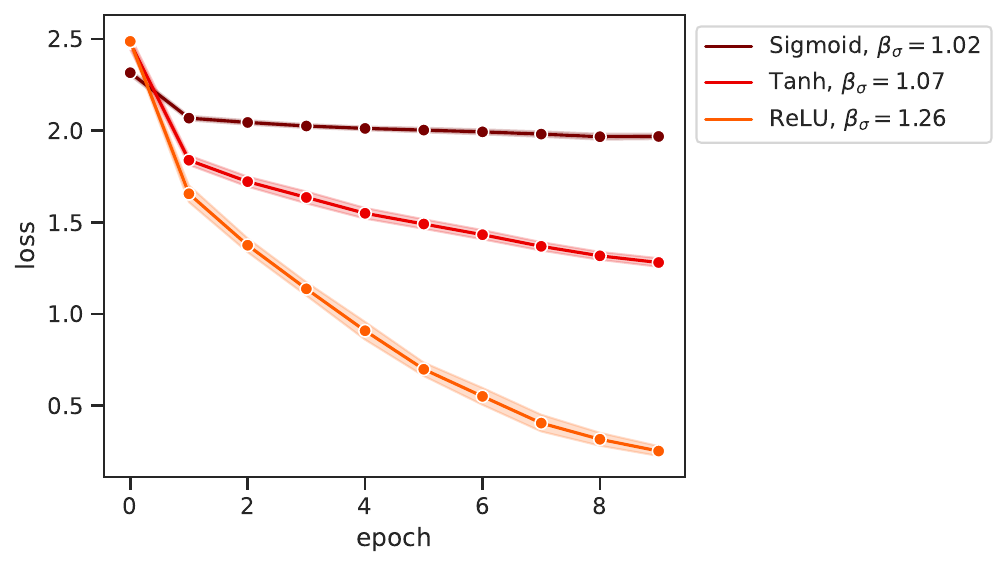}
\caption{\footnotesize \textbf{$\mathbf{\beta_\sigma}$ correlates with SGD training speed.} Training loss (y-axis) vs epoch (x-axis) on a training dataset of \texttt{CIFAR10}. Each curve shows the MLP with corresponding activation, with color codes denoting isometry strength $\beta_\sigma$. Hyper parameters: depth: $10$, width: $1000$, batch size: $512,$ Average of \#5 independent runs.}
\label{fig:beta0_train_loss}
\end{figure}

\section{Discussion}

In this study, we explored the influence of layer normalization and nonlinear activation functions on the isometry of MLP representations. Our findings open up several avenues for future research.

\paragraph{Self normalized activations.}
It is worth investigating whether we can impose isometry without layer normalization. Our empirical observations suggest that certain activations, such as ReLU, require layer normalization to attain isometry. In contrast, other activations, which can be considered as ``self-normalizing'' (e.g., SeLU~\citep{klambauer2017self} and hyperbolic tangent), can achieve isometry with only offset and scale adjustments (see Figure~\ref{fig:self_normalize}). We experimentally show how we can replace centering and normalization by leveraging Hermit expansion of activation. Thus, we believe Hermit expansion provides a theoretical grounding to analyze the isometry of SeLU.  

\begin{figure}[ht]
\centering
\begin{tabular}{c c}
   \includegraphics[width=.39\textwidth]{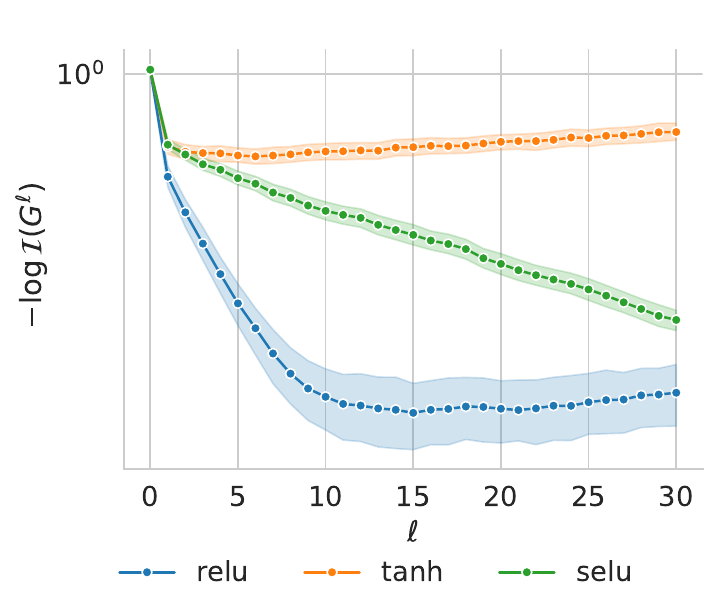}  &  \includegraphics[width=.45\textwidth]{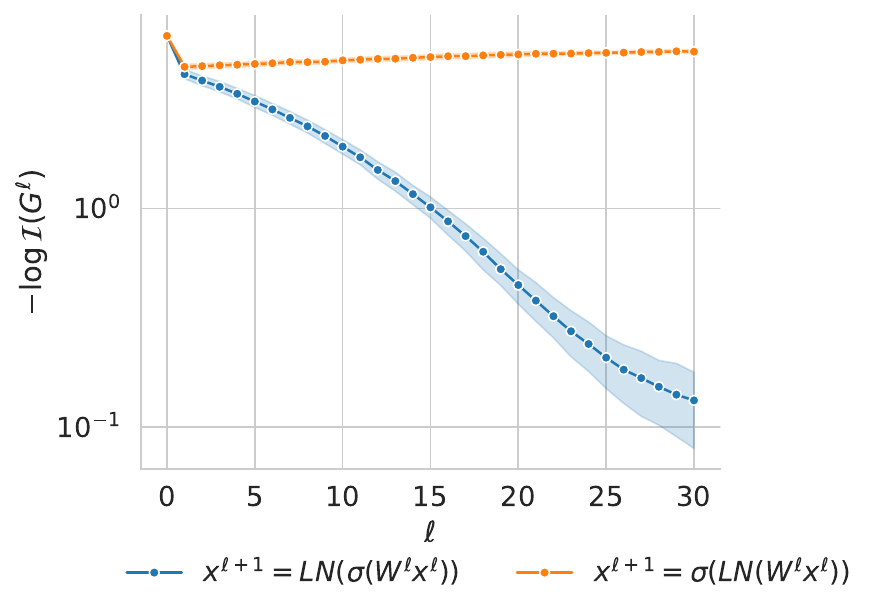}\\
    (a) & (b) 
\end{tabular}
\caption{\footnotesize (a) Exploring the potential of achieving isometry with self-normalized activations. Batch size $n=10$, width $d=1000.$ (b) Impact of the order of activation and normalization layers. Batch size $n=10$, width $d=1000.$}
\label{fig:self_normalize}
\end{figure}

\paragraph{Impact of the ordering of normalization and activation layers on isometry.}
Theorem~\ref{thm:gram_isometry_gap} highlights that the ordering of activation and normalization layers has a critical impact on the isometry. Figure~\ref{fig:self_normalize} demonstrates that a different ordering can lead to a non-isotropic Gram matrix. Remarkably, the structure analyzed in this paper is used in transformers~\citep{vaswani2017attention}. 

\paragraph{Normalization's role in stabilizing mean-field accuracy through depth.}\label{sec:mean_field}

Numerous theoretical studies conjecture that mean-field predictions may not be reliable for considerably deep neural networks \citep{li2021future,joudaki2023bridging}. Mean-field analysis incurs a $O(1/\sqrt{\text{width}})$ error per layer when the network width is finite. This error may accumulate with depth, making mean-field predictions increasingly inaccurate with an increase in depth. However, Figure~\ref{fig:mf_vs_finite} illustrates that layer normalization controls this error accumulation through depth. This might be attributable to the isometry bias induced by normalization, as proven in Theorem~\ref{thm:isometry_normalization}. Similarly, batch normalization also prevents error propagation with depth by imposing the same isometry \citep{joudaki2023bridging}. This observation calls for future research on the essential role normalization plays in ensuring the accuracy of mean-field predictions.

\begin{figure}[htb!]
\centering
\includegraphics[width=.9\textwidth]{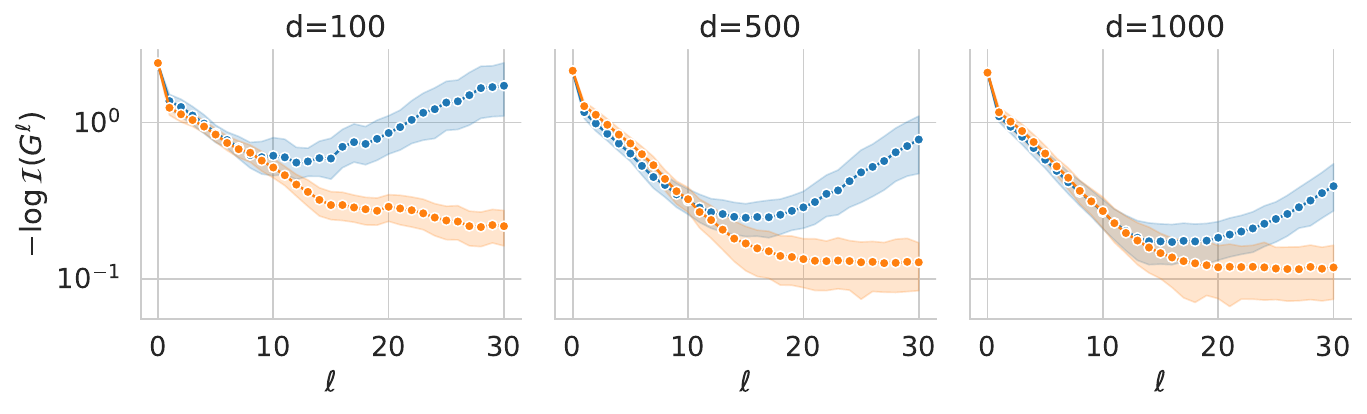}
\vspace{-.2cm}
\caption{\footnotesize Role of normalization in stabilizing mean-field accuracy for ReLU, batch size $n=10.$   \\
Mean-field normalization (blue): $x^{\ell+1} = \frac{a^\ell - c_0}{\sqrt{\bar\sigma(1)}}$. Layer normalization (orange): $x^{\ell+1} = \LN{a^\ell - c_0}$
}
\label{fig:mf_vs_finite}
\end{figure}

\section*{Acknowledgements}
Amir Joudaki is funded through Swiss National Science Foundation Project Grant \#200550 to Andre Kahles, and partially funded by ETH Core funding award to Gunnar Ratsch.
 Hadi Daneshmand acknowledges support from the NSF TRIPODS program (DMS-2022448).
 
\bibliographystyle{unsrtnat}
\bibliography{bibliography}

\medskip

\small

\appendix
\newpage

\section*{Appendix outline}
The appendix is partitioned into four main components, each serving its purpose as described:

\begin{enumerate}
    % \item Section~\ref{sec:isometry_theorems} elaborates on the basic properties of isometry.
    \item Section~\ref{sec:main_theorem} details the all the proofs, with most of it dedicated to proof of Theorem~\ref{thm:gram_isometry_gap} alongside numerical confirmation of significant steps
    \begin{itemize}
        \item Section~\ref{sec:isometry_theorems} elaborates on the basic properties of isometry.
        \item Section~\ref{sec:mean_field_infinite_width} provides an elaborate review of the mean-field Gram dynamics. 
        \item Section~\ref{sec:potential} presents the Lyapunov function $\gamma$, and establishes that this function provides both upper and lower bounds for the isometry, thereby implying that geometric contraction of $\gamma(G^\ell)$ indicates a geometric contraction of isometry gap $-\log\Iso(G^\ell)$.
        \item Section~\ref{sec:dual_activation} proves that $\gamma(G^\ell)$ exhibits an exponential contraction in depth with rate $\beta_\sigma$.
    \end{itemize}
    \item Section~\ref{sec:additional_experiments} outlines our rebuttal responses to the reviews that we chose to leave out of the main text. 
    \begin{itemize}
        \item Section~\ref{sec:experiments} gives additional details concerning the experiments reported in the main text and appendix.
        \item Section~\ref{sec:gain_isometry} explores the effect of gain on isometry, and links the rate to the associated isometry strength. 
        \item Section~\ref{sec:mlp_shape} explores the effect of varying widths of hidden layers on the isometry.
        \item Section~\ref{sec:llm} explores the notion of isometry for representations in language models.
    \end{itemize}

\end{enumerate}

\section{Proofs}\label{sec:main_theorem}

\subsection{Basic properties of isometry}\label{sec:isometry_theorems}

\paragraph{Basic properties of isometry}
It is straightforward to check isometry obeys the following basic isometry-preserving properties: 
\begin{lemma}\label{lem:isometry_basic}
For PSD matrix $M,$ the isometry defined in~\eqref{eq:isometry} obeys the following properties: 1) scale-invariance $\Iso(c M) = \Iso(M),$ 2) only takes value in the unit range $\Iso(M)\in [0,1]$ 3) it takes its maximum value if and only if $M$ is identity $\Iso(M)=1\iff M=I_n,$ and 3) takes minimum value if and only if $M$ is degenerate $\Iso(M)=0. $
\end{lemma}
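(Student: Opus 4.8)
The plan is to reduce everything to the eigenvalues of $M$ and then invoke the AM--GM inequality. Since $M$ is PSD, let its eigenvalues be $\lambda_1,\dots,\lambda_n\ge 0$. Then $\det(M)=\prod_{i=1}^n\lambda_i$ and $\tr(M)=\sum_{i=1}^n\lambda_i$, so that
\[
\Iso(M)=\frac{\left(\prod_{i=1}^n\lambda_i\right)^{1/n}}{\frac1n\sum_{i=1}^n\lambda_i}
\]
is precisely the ratio of the geometric mean to the arithmetic mean of the $\lambda_i$. All the asserted properties then follow from standard facts about this ratio. I would dispose of the trivial case $M=0$ first (with the convention $\Iso(0):=0$), and thereafter assume $M\neq 0$, so that $\tr(M)>0$ and the quotient is well defined.

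For scale-invariance, note the eigenvalues of $cM$ ($c>0$) are $c\lambda_i$, so both means scale by $c$ and the quotient is unchanged, giving $\Iso(cM)=\Iso(M)$. For the range, $\Iso(M)\ge 0$ since numerator and denominator are nonnegative and the denominator is positive, while $\Iso(M)\le 1$ is exactly the AM--GM inequality applied to $\lambda_1,\dots,\lambda_n$. For the maximum, equality in AM--GM holds iff $\lambda_1=\cdots=\lambda_n=:c>0$; since $M$ is symmetric it is orthogonally diagonalizable, so this is equivalent to $M=cI_n$, i.e.\ $M$ is a positive multiple of the identity — which, together with scale-invariance, is the statement that $\Iso(M)=1$ iff $M$ equals the identity up to scale. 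For the minimum, with $\tr(M)>0$ fixed we have $\Iso(M)=0$ iff $\prod_i\lambda_i=0$ iff some $\lambda_i=0$ iff $\det(M)=0$, i.e.\ $M$ is degenerate. The isometry-gap claim $\ID(M)=-\log\Iso(M)\in[0,\infty]$ is then immediate from $\Iso(M)\in[0,1]$, the endpoints $0$ and $\infty$ corresponding to the identity (up to scale) and to a degenerate matrix respectively.

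I do not expect a genuine obstacle here: the argument is elementary once the eigenvalue reformulation is in place. The only points needing a little care in the write-up are the degenerate edge case $M=0$ (where trace and determinant both vanish and the ratio must be assigned by convention), and phrasing the equality case of the maximality property consistently with scale-invariance, so that "$M=I_n$" should be read as "$M$ is a positive scalar multiple of $I_n$"; I would flag this explicitly rather than leaving it implicit.
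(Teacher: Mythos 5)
Your proof is correct and follows essentially the same route as the paper's: diagonalize, rewrite $\Iso(M)$ as the geometric-to-arithmetic mean ratio of the eigenvalues, and apply AM--GM together with its equality case. Your added care about the $M=0$ convention and about reading ``$M=I_n$'' as ``$M$ is a positive scalar multiple of $I_n$'' (forced by scale-invariance) is a small but genuine improvement over the paper's statement of the equality case.
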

\begin{proof}[Proof of Lemma~\ref{lem:isometry_basic}]
    The scale-invariance is trivially true as scaling $M$ by any constant will scale $\det(M)^{1/n}$ and $\tr(M)$ by the same amount. The proof of other properties is a straightforward consequence of writing the isometry in terms of the eigenvalues $\Iso(M) = (\prod_i \lambda_i)^{1/n}/(\frac1n\sum_i \lambda_i ), $ where $\lambda_i$'s are eigenvalues of $M.$ By arithmetic vs geometric mean inequality over the eigenvalues we have $(\prod_i \lambda_i)^{1/n}\le \frac1n\sum_i \lambda_i ),$ which proves that $\Iso(M)\in [0,1].$ Furthermore, the inequality is tight iff the values are all equal $\lambda_1=\dots =\lambda_n,$ which holds only for identity $M=I_n$. Finally, isometry is zero iff at least one eigenvalue is zero, which is the case for degenerate matrix $M.$ 
\end{proof}

\subsection{Mean-field Gram Dynamics}\label{sec:mean_field_infinite_width}
Recall the mean-field Gram dynamics stated in equation~\eqref{eq:mean_field_gram}:
\begin{align}
    G^{\ell+1}_* = \E_{h \sim N(0,G^\ell_*)} \left[   \phi(\act(h)) \phi(\act(h))^\top \right],  && \text{ where } [\phi(a)]_i:= (a_i- \E a_i )/\sqrt{\Var{a_i}},
\end{align}
Assuming that inputs are encoded as columns of $X$, we can restate the MLP dynamics as follows
\begin{align}
    &X^0:=\frac{1}{\sqrt{d}}\LN{X - \Bar{X}}, &&\text{inputs}\\
    & H^\ell:=W^{\ell} X^{\ell}, W^\ell\sim N(0,1)^{d\times d}, && \text{preactivation}\\
    & A^\ell :=\act(H^\ell), &&\text{activations}\\
    & X^{\ell+1} = \frac{1}{\sqrt{d}}\LN{A^\ell - \Bar{A^\ell}} && \text{centering \& normalization},
\end{align}
where centering and layer normalization are applied column-wise, as defined in the main text. 
Observe that Gram matrix of representations can be written as
\begin{align}
   G^{\ell+1}_d &= X^{{\ell+1}^\top}X^{\ell+1}\\
   &= \frac1d \sum_{k=1}^d \left(\frac{A_{k1}^\ell-\mu_1}{s_1},\dots,\frac{A_{kn}^\ell-\mu_n}{s_n}\right)^{\otimes 2}, && 
   \mu_i:=\frac1d\sum_{k=1}^d(A^\ell_{k i}),
   s_i := \sqrt{\frac1d\sum_{k=1}^d(A^\ell_{k i}-\mu_i)^2},
\end{align}
% $G^{\ell+1}_d = X^{{\ell+1}^\top} X^{\ell+1} = \frac1d \sum_{k=1}^d [(A^\ell_{ki} -\mu_i)(A^\ell_{kj} -\mu_j)/\sqrt{s_i s_j}],$
where $\otimes $ denotes Hadamard product,  and subscript $_d$ emphasises the dependence of Gram on width $d.$ 
Note that conditioned on the previous layer, rows of $H^\ell$ and $A^\ell$ are \iid, because of independence of rows of $W^\ell.$
Thus, by law of large numbers, in the infinitely wide network regime, $\mu_i$ and $s_i$ will converge to the expected mean and variance respectively $\lim_{d\to\infty} \mu_i = \E A^{\ell}_{1i},$ and $\lim_{d\to\infty} s_i = \sqrt{\text{Var}(A^{\ell}_{1i})}, $ for all $i=1,\dots,n.$ By construction of $\phi,$ in the infinitely wide regime, we can rewrite Gram dynamics as $\lim_{d\to\infty} G^\ell_d = \frac1d\sum_{k=1}^d \phi(A^\ell_{i\cdot})^\top \phi(A^\ell_{i\cdot}).$ We can invoke the fact that rows of $A^\ell$ are \iid to conclude that $G_d$ is the sample Gram matrix that converges to its expectation 
\begin{align}
    \lim_{d\to\infty} G^\ell_d = \E \phi(A^\ell_{1\cdot})\phi(A^\ell_{1\cdot})^\top= \E_{h\sim N(0,G^\ell)} \phi(\act(h))\phi(\act(h))^\top =: G^\ell_*,
\end{align}
where $_*$ denotes the mean-field regime $d\to\infty.$ This concludes the connection between the mean-field Gram dynamics and infinitely wide Gram dynamics.

\subsection{Introducing a potential}\label{sec:potential}

Here we will introduce a Lyapunov function that enables us to precisely quantify the isometry of activations in deep networks:

\begin{definition}
Given a positive semidefinite matrix $G\in \R^{n\times n}$, we define $\gamma:\R^{n\times n}\to\R^{\ge 0}$ as:
\begin{align}
&\gamma(G):=\max_{i\neq j}\frac{ |\Norm{G}_{ij}|}{1- |\Norm{G}_{ij}|}, && \Norm{G}_{ij}:=G_{ij}/\sqrt{G_{ii}G_{jj}}.
\end{align}
\end{definition}

Remarkably, $\gamma$ obey exhibits an geometric contraction under one MLP layer update, which is stated in the following theorem: 

\begin{theorem} \label{thm:lyapunov}
Let $G$ be PSD matrix with unit diagonals $G_{ii}=1,i=1,\dots,n.$ It holds:
\begin{align}
    \gamma\left(\E_{h \sim N(0,G)} \left[   \phi(\act(h)) \phi(\act(h))^\top \right]\right) \le \gamma(G)/\beta_\sigma,  && \text{ where } [\phi(a)]_i:= (a_i- \E a_i )/\sqrt{\Var{a_i}}.
\end{align}
\end{theorem}
 
Thus, we may apply Theorem~\ref{thm:lyapunov} iteratively to prove that in the mean-field, the Lyapunov function $\gamma(G^\ell)$ decays at an exponential rate $\beta_\sigma.$ 
A straightforward induction over layers leads to a decay rate in $\gamma$, which is presented in the next corollary.

\begin{corollary}
\label{cor:potentil_decay}
If activation $\act$ has Hermite expansion with non-linearity strength $\beta_\sigma$, the  mean-field Gram matrices $G^\ell_*$, obey Lyapunov of these Gram matrices decays at an exponential rate $\beta_\sigma$:
\begin{align}
\gamma(G^{\ell}_*) \le \gamma(G^0_*) \beta_\sigma^{-\ell},
\end{align}
where $G^0_*$ denotes the input Gram matrix.
\end{corollary} 
In Figure~\ref{fig:nonlinear_lyapunov}, we experimentally validated the above equation for MLPs with a finite width and activations $\{\text{tanh ,relu, sigmoid}\}$ where we observe that the mean-field analysis well predicates the decay in $\gamma$. 

\begin{figure}[htbp]
\centering
\includegraphics[width=\textwidth]{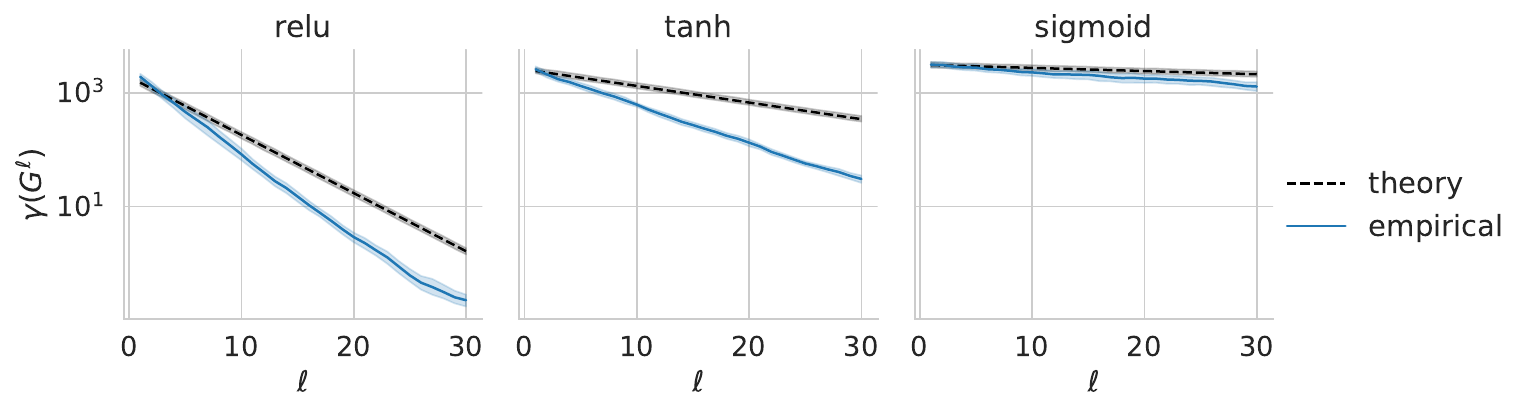}
\caption{$\gamma(G^\ell)$ vs depth $\ell, $ for MLP with $n=10, $ $d=10000,$ various activations $\act$. Solid lines shows average of $10$ independent runs. The dashed traces the theoretical upper bounds given in Corollary~\ref{cor:potentil_decay}.}
\label{fig:nonlinear_lyapunov}
\end{figure}

Interestingly, we can connect the Lyapunov $\gamma$ to the isometry, by proving an upper and lower based on the determinant $G$ based on $\gamma(G)$, when $G$ is PSD and has unit diagonals: 

\begin{lemma}\label{lem:gamma_iso}
For PSD matrix $G$ with unit diagonals holds:
\begin{align}
       \left(1-(n-1) \max_{i\neq j}|G_{ij}|\right)^n \le \det(G) \le 1 - \max_{i\neq j} G_{ij}^2 ,
\end{align} 
where the lower bound holds if $(n-1)|G_{ij}|\le 1.$
\end{lemma}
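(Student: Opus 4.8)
The plan is to bound $\det(G)$ from above and below using elementary properties of positive semidefinite matrices with unit diagonal, writing $G = I_n + E$ where $E$ is the hollow matrix of off-diagonal entries, so that $\max_{i\neq j}|E_{ij}| = \max_{i\neq j}|G_{ij}| =: \rho$.

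For the \emph{upper bound}, I would use Hadamard's inequality in its determinant form together with a trace/eigenvalue argument. Since $G$ is PSD with $\tr(G) = n$, its eigenvalues $\lambda_1,\dots,\lambda_n \ge 0$ sum to $n$. By the AM–GM inequality, $\det(G) = \prod_i \lambda_i \le \left(\frac{1}{n}\sum_i \lambda_i\right)^n = 1$, but that only gives $\det(G)\le 1$, which is weaker than claimed. The sharper bound $\det(G) \le 1 - \rho^2$ should come from isolating the $2\times 2$ structure: one approach is to note that $\det(G)$ is bounded by the product over a clever pairing, but cleanest is to observe that for any fixed pair $i\neq j$ achieving the maximum, the $2\times 2$ principal minor is $1 - G_{ij}^2 = 1-\rho^2$; then use Fischer's inequality, $\det(G) \le \det(G_{\{i,j\}}) \cdot \det(G_{\text{rest}}) \le (1-\rho^2)\cdot 1$, where the second factor is bounded by $1$ again via AM–GM on the eigenvalues of that principal submatrix (which also has unit diagonal). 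Fischer's inequality applies precisely because $G$ is PSD.

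For the \emph{lower bound}, I would invoke the Gershgorin circle theorem: every eigenvalue of $G$ lies in some disc centered at $G_{ii}=1$ with radius $\sum_{j\neq i}|G_{ij}| \le (n-1)\rho$. Hence, under the stated hypothesis $(n-1)\rho \le 1$, every eigenvalue satisfies $\lambda_k \ge 1 - (n-1)\rho \ge 0$. Therefore $\det(G) = \prod_k \lambda_k \ge \bigl(1 - (n-1)\rho\bigr)^n$, which is exactly the claimed lower bound.

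The main obstacle is getting the \emph{constant} right in the upper bound: the naive AM–GM argument yields only $\det(G)\le 1$, so the key insight is that Fischer's inequality lets one peel off the worst $2\times 2$ block contributing the factor $1-\rho^2$ while the complementary block still contributes at most $1$. I would double-check that the complementary principal submatrix indeed has unit diagonal (it does, being a principal submatrix of $G$) so the AM–GM bound of $1$ applies there, and that Fischer's inequality is stated for the right partition. The lower bound via Gershgorin is routine once the hypothesis $(n-1)\rho\le 1$ is in hand; the only subtlety is noting that this hypothesis is exactly what guarantees nonnegativity (hence that the product of eigenvalues is a product of nonnegative reals bounded below termwise).
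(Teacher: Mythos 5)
Your proof is correct and follows essentially the same route as the paper: the lower bound is the identical Gershgorin-disc argument, and your upper bound via Fischer's inequality (peeling off the extremal $2\times 2$ principal minor and bounding the complementary block's determinant by $1$ via AM--GM on its unit-diagonal eigenvalues) is just the algebraic form of the paper's volume/projection-distance decomposition. If anything, invoking Fischer's inequality explicitly is slightly cleaner, since the paper's recursive volume argument leaves implicit the reordering needed so that the maximizing pair is the one being peeled off.
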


Now, we are ready to prove the main theorem. 
\begin{theorem}[Restated Theorem~\ref{thm:gram_isometry_gap}] \label{thm:restated} Let $\act$ be an activation function with a Hermite expansion and a non-linearity strength~$\beta_\sigma,$ (see equation~\eqref{eq:beta}). Given non-degenerate input Gram matrix $G^0_*,$ then for sufficiently large layer $\ell \gtrsim \beta_\sigma^{-1}(-n\log\Iso(G_*^0)+\log(4n))$, we have
\begin{align}
      -\log\Iso(G^\ell_*) \le \exp(-\ell\log\beta_\sigma - n\log\Iso(G^0_*) 
 + \log(4n)).
\end{align}
\end{theorem}

\begin{proof}[Proof of Theorem~\ref{thm:gram_isometry_gap}]
First, note that we have the following transformation 
\begin{align}\label{eq:Gmax_gamma}
    t = \frac{x}{1-x} \implies x = \frac{t}{t+1}
    \implies \max_{i\neq j} |G_{ij}| = \frac{\gamma(G)}{1+\gamma(G)},
\end{align}
we have
\begin{align}
   \det(G_*^\ell) & \ge (1-(n-1)\max_{i\neq j} |G_{{ij}_*}^\ell|)^n && \text{Lemma~\ref{lem:gamma_iso}}\\
   \det(G_*^\ell) 
   &= \left(1-(n-1)(\gamma(G^\ell_*)/(1+\gamma(G^\ell_*)))\right)^n  && \text{Invoke ~\eqref{eq:Gmax_gamma}}\\
   &\ge \left(1-(n-1)\gamma(G^\ell_*)\right)^n  \\
   \implies-\frac1n\log\det(G_*^\ell)&\le -\log(1-(n-1)\gamma(G_*^\ell)) \\
   &\le 2n \gamma(G_*^\ell) && \text{for $\gamma(G^\ell_*) < 1/(2n-2)$}\\
   &\le 2n \gamma(G_0) \beta_\sigma^{-\ell}.
\end{align}
By upper bound of Lemma~\ref{lem:gamma_iso} we have $\max_{i\neq j} |G_{ij}| \le \sqrt{1-\det(G)}\le 1-\det(G)/2,$ implying $\gamma(G_0) \le 2\det(G_0)^{-1},$ which allows us to 
\begin{align}
     - \log\Iso(G_*^\ell) &\le 4n\det(G_0)^{-1}\beta_\sigma^{-\ell} \\
     &\le \exp(-\ell\log\beta_\sigma - n\log\Iso(G_0) + \log(4n))
\end{align}
which concludes the proof.
\end{proof}

\begin{proof}[Proof of Lemma~\ref{lem:gamma_iso}]

\emph{Lower bound.} The lower bound is a result of Gershgorin circle theorem~\citep{gershgorin1931uber}, which implies that every eigenvalue must be within the disc $[1-(n-1)\max_{i\neq j}|G_{ij}|,1+(n-1)\max_{i\neq j}|G_{ij}|].$ Thus, the determinant is lower-bounded by $(1-(n-1)\max_{i\neq j} |G_{ij}|)^n.$

\emph{Upper bound.}
Since $G$ is PSD, we can write $G = X^\top X,$ which implies that columns of $X$ are unit norm $\norm{x_i}=1,$ and $G$ encodes the angles between them $\cos\angle(x_i,x_j)=\ip{x_i}{x_j}=G_{ij}.$ Furthermore, we have $\det(G) = \text{vol}(X)^2,$ where the volume refers to the parallelepiped spanned by the columns of $X.$ With this formulation, we write volume recursively as volume spanned by columns $1$ up to $n-1,$ times the projection distance of $x_n$ from their span 
\begin{align*}
    \text{vol}(x_1,\dots,x_n) = \text{dist}(x_n,\text{span}(x_1,\dots,x_{n-1}))\text{vol}(x_1,\dots,x_{n-1}),
\end{align*}
where span refers to the space of all linear combinations of these vectors. Note that projection distance of $x_i$ onto $x_j$ can be written as $\sin\angle(x_i,x_j)=\sqrt{1-\ip{x_i}{x_j}^2} = \sqrt{1 - G_{ij}^2}.$ Since the projection distance onto the linear span cannot be greater than projection distance onto a single vector, which is bounded by $\sqrt{1 - \max_{i\neq j}G_{ij}^2}$. This concludes the upper bound that $\det(G) \le 1-\max_{i\neq j} G_{ij}^2$.
\end{proof}

\subsection{Dual activation and proof of Thm.~\ref{thm:lyapunov}}\label{sec:dual_activation}

According to~\citet{daniely2016toward}, we leverage the notion dual activation associated with the activation function $\act,$ which is defined in following.

\begin{definition}
Given activation $\act:\R\to\R$ that is square integrable with respect to the Gaussian kernel, define its dual activation $\hat\act:\R\to\R,$ and its mean reduced dual $\bar\act:\R\to\R$ as:
\begin{align}
&\hat\act(\rho) := \E \act(x)\act(y) %= \sum_{k=0}^\infty c_k^2 \rho^k \\
&\bar\act(\rho) := \Expec{(\act(x)-\E \act(x))(\act(y)-\E \act(y))} %= \sum_{k=1}^\infty c_k^2 \rho^k
\end{align}
\end{definition}

The following lemma connects the dual activation and its mean-reduced version with the Hermite expansion of $\act$:

\begin{lemma}\label{lem:dual_kernel}
Given two standard Gaussian variables $X,Y\sim N(0,1)$ with covariance $\E X Y = \rho,$ and activation $\act$ with normalized Hermite coefficients ${c_k}_k, $ we have
\begin{align}
\hat\act(\rho) = \sum_{k=0}^\infty c_k^2\rho^k, && \bar\act(\rho) = \sum_{k=1}^\infty c_k^2 \rho^k.
\end{align}
\end{lemma}

Finally, we have the tools to prove the first main theorem.
\begin{proof}[Proof of Theorem~\ref{thm:lyapunov}]

Let $a = \act(h)$ for $h\sim N(0,G).$ Note that by definition of the dual-activation we have $\E a a^\top = [\hat\act(G_{ij})]_{i,j\le n}.$ Since we assumed $G$ has unit diagonals, we have $h_i\sim N(0,1),$ which implies that $\E a_i = \E_{h_i\sim N(0,1)} \act(h_i) = c_0,$ implying that $\E (a_i - \E a_i)(a_j - \E a_j)^\top = \hat\act(G_{ij})-c_0^2 = \bar\act(G_{ij}).$ Furthermore, the variance can be driven as $\Var(a_i) = \E a_i^2 - (\E a_i)^2 = \sum_{k=0}^\infty c_k^2  = \sum_{k=1}^\infty c_k^2 = \bar\act(1)$ for all $i=1,\dots,n.$ Thus, we have $\E \phi(a_i)\phi(a_j) = \bar\act(G_{ij})/\bar\act(1).$ In the matrix form we have 
\begin{align}
    \gamma\left( \E_{h\sim N(0,G)}\left[\phi(\act(h))\phi(\act(h))^\top \right]\right)=\gamma\left( [\bar\act(G_{ij})/\bar\act(1)]_{i,j\le n}\right)
\end{align}  
The remainder proof relies on the following contractive property of Gram matrix potential:
    \begin{lemma}\label{lem:hermite_nonlinearity}
    Consider activation $\act,$ with normalized Hermite  coefficients $\{c_k\}_{k\ge 0}.$ For all $\rho\in(0,1),$ the mean-reduced dual activation $\bar\act$ obeys
    \begin{align}
        \frac{|\bar\act(\rho)|/\bar\act(1)}{1-|\bar\act(\rho)|/\bar\act(1)} \le \beta_\sigma^{-1}\frac{|\rho|}{1-|\rho|},
    \end{align}
    which the right hand-side is strictly larger if some nonlinear coefficient is nonzero $c_k\neq 0$ for some $k\ge 2.$
    \end{lemma}
    Thus we can apply Lemma~\ref{lem:hermite_nonlinearity} on each element $i\neq j$ to conclude that 
    \begin{align}\label{eq:lemma}
    \frac{|\bar\act(G_{ij})|/\bar\act(1)}{1-|\bar\act(G_{ij})|/\bar\act(1)} \le \frac{|G_{ij}|}{1-|G_{ij}|}\beta_\sigma^{-1}  && \text{ Lemma~\ref{lem:hermite_nonlinearity} for all } i\neq j \\
    \le \beta_\sigma^{-1} \max_{i\neq j}\frac{|G_{ij}|}{1-|G_{ij}|} \\
    =\beta_\sigma^{-1}\gamma(G).\\
    \end{align}
    since the inequality holds for any value of $i\neq j,$ we can take the maximum over $i\neq jj$ to write:
    \begin{align}
\gamma\left(\bar\act(G)/\bar\act(1)\right)= \max_{i\neq j}\frac{|\bar\act(G_{ij})|/\bar\act(1)}{1-|\bar\act(G_{ij})|/\bar\act(1)} \le \beta_\sigma^{-1}\gamma(G),
    \end{align}
    which concludes the proof.
\end{proof}

\begin{proof}[Proof of Lemma~\ref{lem:hermite_nonlinearity}]
    Note the ratio is invariant to scaling of $\sum_{k=1}^\infty c_k^2$. Hence, we assume $\sum_{k=1}^\infty c_k^2 = 1$ without loss of generality.
    With this simplification, we have $\bar\act(1) = 1.$ For the positive range $\rho\in[0,1]$ we have
    \begin{align}
        (\frac{\bar\act(\rho)}{1-\bar\act(\rho)})  (\frac{\rho}{1-\rho})^{-1}  
        &= \frac{\rho^{-1}\sum_{k=1}^\infty  c_k^2 \rho^{k} }{(1-\rho)^{-1}\sum_{k=1}^\infty c_k^2 (1-\rho^{k})} \\
        &= \frac{\sum_{k=1}^\infty c_k^2 \rho^{k-1}}{\sum_{k=1}^\infty c_k^2 (\frac{1-\rho^{k}}{1-\rho})}\\
        &= \frac{\sum_{k=1}^\infty c_k^2 \rho^{k-1}}{\sum_{k=1}^\infty c_k^2 \left(\sum_{i=0}^{k-1}\rho^{i}\right)}\\
        &\le  \frac{\sum_{k=1}^\infty c_k^2 \rho^{k-1}}{(\sum_{k=1}^\infty c_k^2 \rho^{k-1}) + \sum_{k=2}^\infty c_k^2 }\\
        &\le  \max_{\rho\in[0,1]}\frac{\sum_{k=1}^\infty c_k^2 \rho^{k-1}}{(\sum_{k=1}^\infty c_k^2 \rho^{k-1}) + \sum_{k=2}^\infty c_k^2 }\\
        &= \frac{\sum_{k=1}^\infty c_k^2}{\sum_{k=1}^\infty c_k^2 + \sum_{k=1}^\infty c_k^2 - c_1^2 } \\
        & =  \frac{1}{\beta_\sigma}.
    \end{align}
    Thus for $\rho\in[0,1]$ we have 
    \begin{align}
       \frac{\bar\act(\rho)}{1-\bar\act(\rho)} \le \frac{\rho}{1-\rho}\beta_\sigma^{-1}.
    \end{align}
    By Jensen inequality for convex function $x\mapsto |x|$ we have
    \begin{align}
    |\bar\act(\rho)|=|\sum_{k=1}^\infty c_k^2 \rho^k | &\le \sum_{k=1}^\infty c_k^2 |\rho|^k 
 = \bar\act(|\rho|)
    \end{align}
    Because $x\mapsto x/(1-x)$ is monotonically increasing for $x\in[0,1]$, we have 
    \begin{align}
         \frac{|\bar\act(\rho)|}{1-|\bar\act(\rho)|}\le \frac{\bar\act(|\rho|)}{1-\bar\act(|\rho|)} \le \frac{|\rho|}{1-|\rho|}\beta_\sigma^{-1}.
    \end{align}
Where we invoked the inequality that was proven for $\rho\in[0,1],$ because $|\rho|\in[0,1].$ 
\end{proof}

The following lemma, which is a consequence of Mehler's formula, is at the hart of proof of Lemma~\ref{lem:dual_kernel}:

\begin{lemma}[Consequence of Mehler's kernel]\label{lem:mehler_kernel}
If $X,Y \sim N(0,1)$ with covariance $E XY = \rho$ we have 
$$E_{X,Y} \He_n(X)\He_k(Y) = \rho^n \delta_{nk}$$
where $\delta_{nk}$ is the Dirac delta.  
\end{lemma}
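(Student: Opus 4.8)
The plan is to invoke Mehler's formula, which expands the bivariate Gaussian density in a Hermite series, and then to integrate term by term against $\He_n\He_k$ using the single-variable orthonormality of the normalized Hermite polynomials. First recall that, by the definition of the normalized coefficients applied to $\act=\He_j$, the polynomials $\He_j$ are orthonormal under the standard Gaussian measure, i.e. $\E_{Z\sim N(0,1)}[\He_j(Z)\He_m(Z)]=\delta_{jm}$. Mehler's formula then states that for $|\rho|<1$,
\begin{align*}
\sum_{m=0}^\infty \rho^m \He_m(x)\He_m(y)=\frac{1}{\sqrt{1-\rho^2}}\exp\!\left(-\frac{\rho^2(x^2+y^2)-2\rho xy}{2(1-\rho^2)}\right)=\frac{p_\rho(x,y)}{p(x)\,p(y)},
\end{align*}
where $p$ is the standard normal density and $p_\rho$ is the density of the pair $(X,Y)$ (jointly Gaussian, unit variances, correlation $\rho$). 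Rearranging gives $p_\rho(x,y)=p(x)p(y)\sum_{m\ge0}\rho^m\He_m(x)\He_m(y)$.

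From here I would compute
\begin{align*}
\E_{X,Y}\big[\He_n(X)\He_k(Y)\big]=\sum_{m=0}^\infty \rho^m\,\E[\He_n(X)\He_m(X)]\,\E[\He_k(Y)\He_m(Y)]=\sum_{m\ge0}\rho^m\delta_{nm}\delta_{km}=\rho^n\delta_{nk},
\end{align*}
where in the first step the Hermite expansion of $p_\rho$ is substituted and the sum is pulled outside the integral. The only delicate point is this interchange: for $|\rho|<1$ the partial sums of $\sum_m\rho^m\He_m\otimes\He_m$ form a Cauchy sequence in $L^2(p\otimes p)$ (the functions $\He_m\otimes\He_m$ are orthonormal there and $\sum_m\rho^{2m}<\infty$), while $\He_n\otimes\He_k\in L^2(p\otimes p)$, so Cauchy--Schwarz justifies term-by-term integration. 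The boundary values $\rho=\pm 1$, if needed, follow from $Y=\pm X$ together with $\He_k(-x)=(-1)^k\He_k(x)$, which matches $\rho^n\delta_{nk}$ at $\rho=\pm1$.

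I expect this justification of term-by-term integration — and getting the normalization in Mehler's formula right for the \emph{normalized} Hermite polynomials — to be the only real obstacle; everything else is bookkeeping. A route that avoids Mehler's formula altogether is to compare generating functions: from $\sum_{k\ge0}t^k\He_k(x)/\sqrt{k!}=e^{tx-t^2/2}$ and $\E e^{sX+tY}=e^{(s^2+2\rho st+t^2)/2}$ one obtains $\E\big[e^{sX-s^2/2}\,e^{tY-t^2/2}\big]=e^{\rho st}$, and matching the coefficient of $s^nt^k$ on both sides yields $\E[\He_n(X)\He_k(Y)]=\rho^n\delta_{nk}$ directly; this is in fact just Mehler's identity read off at the level of generating functions, so it also justifies the label in the lemma statement.
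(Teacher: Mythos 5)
Your proof is correct and follows essentially the same route as the paper's: expand the joint density via Mehler's formula and integrate term by term against $\He_n\otimes\He_k$ using orthonormality. The extra care you take in justifying the sum--integral interchange (and the generating-function shortcut you sketch) goes beyond what the paper writes but does not change the argument.
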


\begin{proof}[Proof of Lemma~\ref{lem:dual_kernel}]
    Let $\act(x) = \sum_{k=0} c_k \He_k(x),$ denote the Hermite expansion of $\act.$ Thus, we have
    \begin{align}
        \hat\act(\rho)&:=\E_{X,Y}\act(X)\act(Y)\\
        &= \sum_{n,k=0}^\infty c_n c_k \E_{X,Y} \He_n(X)\He_k(Y) \\
    &= \sum_{k=0}^\infty c_k^2 \rho^k,
    \end{align}
    where in the last line we applied result of Lemma~\ref{lem:mehler_kernel}. For the mean-reduced dual kernel $\bar\act,$ observe that $\E_{X\sim N(0,1)} \act(X) = c_0.$ Thus, the reduction of mean will cancel the $k=0$ term, which concludes the proof that $\bar\act(\rho) = \sum_{k=1}c_k^2 \rho^k.$
\end{proof}

\begin{proof}[Proof of Lemma~\ref{lem:mehler_kernel}]
The property can be deduced from Mehler's formula~\cite{mehler1866ueber}. The formula states that 
\begin{align}
&\frac{1}{\sqrt{1-\rho^2}}\exp\left(-\frac{\rho^2(x^2+y^2)-2xy\rho}{2(1-\rho^2)}\right) \\
&= \sum_{m=0}^\infty \He_m(x)\He_m(y),
\end{align}
where the $m!$ factor difference is due to the definition of Hermite polynomials with an additional $1/\sqrt{m!}$ compared to the one used in Mehler's kernel. 
Observe that the left hand side is equal to $p(x,y)/p(x)p(y)$, where $p(x,y)$ is the joint PDF of $(X,Y)$, and $p(x),p(y)$ are PDF of $X$ and $Y$ respectively. Therefore, we can take the expectation using the expansion 
\begin{align}
\E_{X,Y}\left[ \He_n(X) \He_k(Y)\right] &= \int \He_n(x)\He_k(y)p(x,y)dx dy \\
&=  \sum_{m=0}^\infty \rho^m\int \He_n(x)\He_k(y) \He_m(x)\He_m(y) dp(x)dp(y)\\
&= \sum_{m=0}^\infty \rho^m\E_{X\sim N(0,1)} \left[\He_n(X)\He_m(X)\right]\E_{Y\sim N(0,1)} \left[\He_k(Y)\He_m(Y)\right] \\
&= \rho^n \delta_{nk} 
\end{align}
where in the last line we used the orthogonality property $E_{X\sim N(0,1)} H_k(x) H_n(X)=\delta_{nk}$. 
\end{proof}
\section{Additional experiments}\label{sec:additional_experiments}
\subsection{Details about empirical validations}\label{sec:experiments}
\paragraph{Hardware.} 
Experiments that did not require training a model were run on a \texttt{AMD Ryzen 9 3900X 12-Core} CPU, which takes about 5 minutes overall. All experiments that required training a model were trained a single \texttt{NVIDIA GeForce RTX 3090} GPU, which takes about 1 minutes per each training task.

\paragraph{Figures.} The solid lines in all plots represent the average performance over multiple independent runs, and the shaded regions indicate the confidence intervals. Unless stated otherwise, each average is computed over \#10 independent runs.

\paragraph{Codes and reproducibility.} 
We implemented our experiments in Python using the PyTorch framework~\cite{paszke2019pytorch}. All the figures are reproducible with the code attached in the supplementary.

\paragraph{Training procedure}
For all training-related experiments, the isometry or isometry gap are computed per each batch by sampling a few of the batches randomly, and then averaged over. The epoch $i$ corresponds to the network at after $i$ steps of training on the training set of CIFAR10 (epoch $0$ means network is at initialization).

\paragraph{Pre-trained large language models}
The pre-trained language models and their default configuration was downloaded from Huggingface~\cite{wolf2020transformers} library.

\subsection{Quantifying the influence of gain on isometry through non-linearity strength}\label{sec:gain_isometry}

The concept of gain in neural networks is vital and closely connected with the weights initialization. A neural network with properly initialized weights can learn faster, have a lesser chance of getting stuck at sub-optimal solutions, and provide better generalization. The impact of gain can be visualized through the lens of weight initialization strategies such as Xavier normalization~\citep{glorot2010understanding}, which has shown significant effectiveness in optimizing neural networks. These initialization strategies apply a gain value to the weights, which is a scaling factor, to ensure a good signal flow through many layers during the forward and backward passes. The gain value essentially determines the variance of the weights in the initialization stage.

As an extension to our prior investigations, we delve into understanding the influence of gain on isometry, predominantly through our calculated metric, the non-linearity strength, denoted as $\beta_\sigma$, as a function of gain $\alpha.$ For certain instances, such as ReLU, sine, and exponential activations, we are capable of deriving $\beta_\sigma(\alpha)$ in a closed form. Table~\ref{tab:gain} presents a few of these cases.

\begin{table}[ht]
\centering
\begin{tabular}{lccc}
\toprule
$\act$ & $\exp(\alpha x)$ & $\sin(\alpha x)$ & $\max(\alpha x,0)$ \\
\midrule
$\beta_\sigma$ & $\frac{2 - 2 e^{\alpha^2} + \alpha^2}{1 - e^{\alpha^2}}$ & $\frac{2 (-1 + e^{2 \alpha^2} - e^{\alpha^2} \alpha^2)}{-1 + e^{2 \alpha^2}}$ & $\frac{4 - 3 \pi}{2 - 2 \pi}$ \\
\bottomrule
\end{tabular}
\vspace{.2cm}
\caption{Relationship between non-linearity strength and gain.}
\label{tab:gain}
\end{table}

For a more extensive selection of activation functions, we have numerically computed the non-linearity strength as a function of gain $\alpha$, as visualized in Figure~\ref{fig:beta_vs_gain}. Leveraging Theorem~\ref{thm:gram_isometry_gap}, these computed values provide an estimation for the isometry strength for various activations. This correlation proves to be remarkably predictive, as shown in Figure~\ref{fig:gain_isometry}. Remarkably, in the case of ReLU activation, its unique characteristics lead to both our closed form $\beta_\sigma$ (refer Table~\ref{tab:gain}) and rate towards isometry (refer Figure~\ref{fig:gain_isometry}) remaining consistent across different values of $\alpha$.

\begin{figure}[ht]
\centering
\includegraphics[height=.3\textwidth]{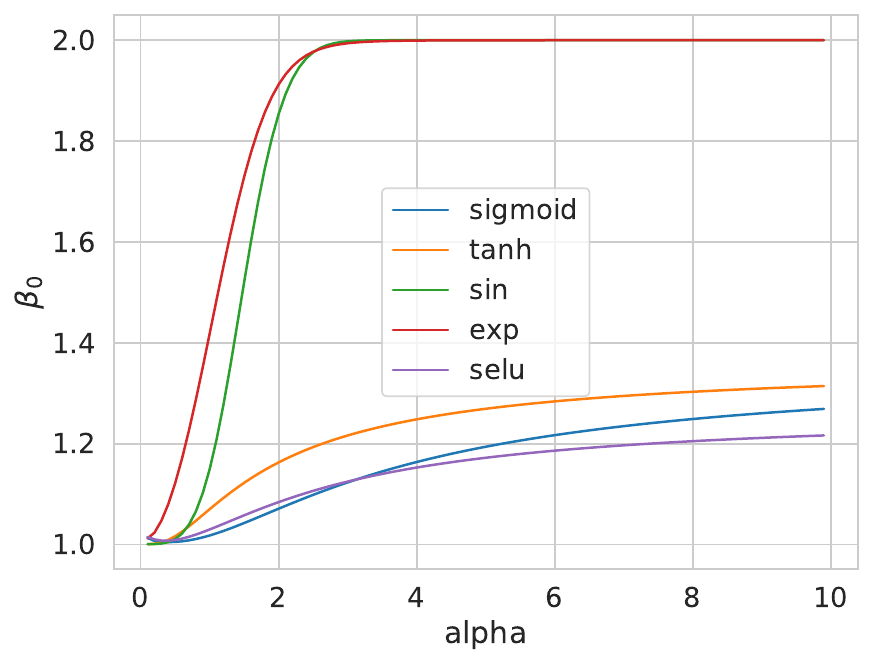}
\vspace{-.3cm}
\caption{Analysis of gain through the lens of isometry strength.}
\label{fig:beta_vs_gain}
\end{figure}

\begin{figure}[ht]
\centering
\includegraphics[width=1\textwidth]{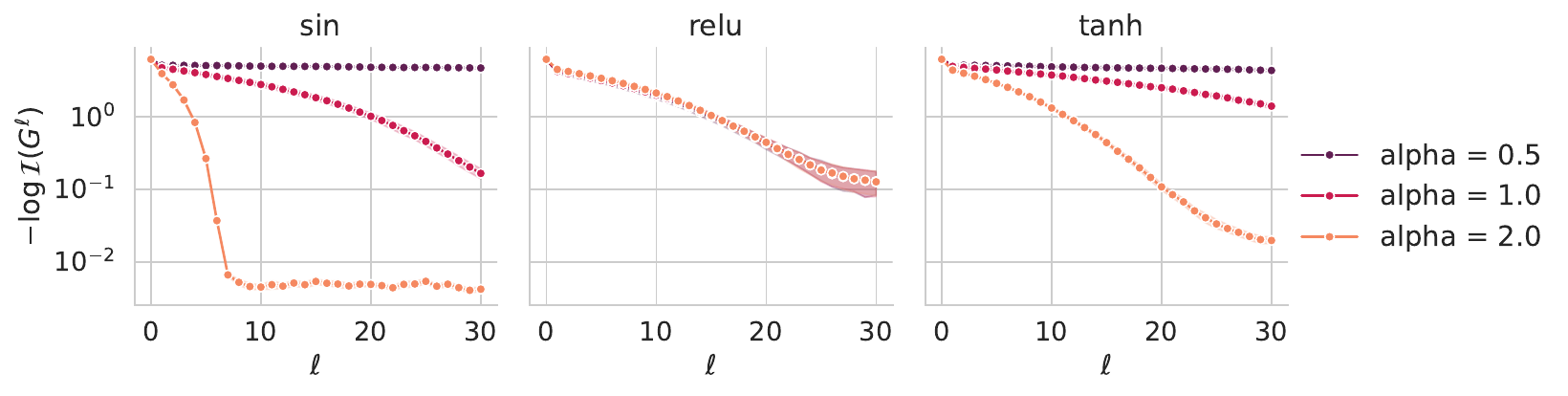}
\caption{Influence of gain on the attainment of isometry.}
\label{fig:gain_isometry}
\end{figure}

% \begin{figure}[ht]
% \centering
% \includegraphics[width=1\textwidth]{}
% \caption{Isometry vs depth for hyperbolic tangent activation. The traces depict activations with different gains $\alpha=0.5,1,2,5. $ The dash traces are the lines are parallel to $\beta_\sigma^{-\ell}$.}
% \label{fig:gain_iso_gap_pred}
% \end{figure}

\begin{figure}[ht]
\centering
\includegraphics[width=1\textwidth]{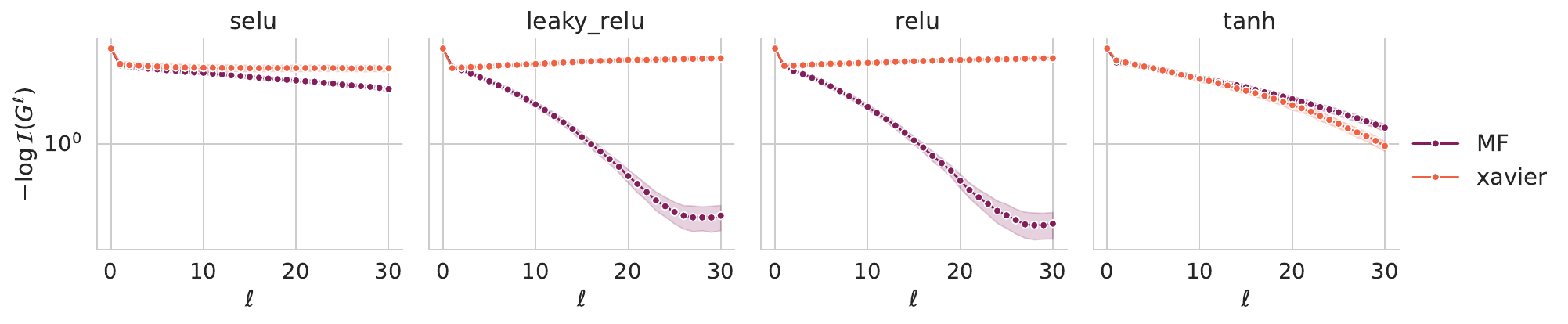}
\caption{Isometry vs depth for mean-field centering and normalization to Xavier initialization.}
\label{fig:MF_vs_xavier}
\end{figure}

\paragraph{Comparison to Xavier gain for initialization}
Inspired by the results so far, we can compare mean-field centering and normalization to Xavier gain for activations.  Figure~\ref{fig:MF_vs_xavier} demonstrates that all mean-field based gains improve the isometry when compared with Xavier initialization. However, this is markedly stronger for ReLU and leaky ReLU. We can explain this starker contrast by the fact that both activations have a significant offset term $c_0,$ which is not corrected by the Xavier initialization.

\subsection{Varying width of hidden layers}\label{sec:mlp_shape}
While in our theoretical setup, we assume the network width is constant across the layers, this is only a choice to streamline our proof and notation. Since our primary result is derived from the mean-field regime, the only criterion for it to hold is for the width to be sufficiently large to approximate the mean-field regime. Our experiments in Figure~\ref{fig:width} substantiate this claim that the specific sizes of hidden layers, as long as they are large, will not impact our main results on the isometry. We empirically validate this for four different configurations and show that the decay of the isometry gap remains largely consistent across these configurations. 
\begin{figure}[bht!]
\centering
\includegraphics[width=0.9\textwidth]{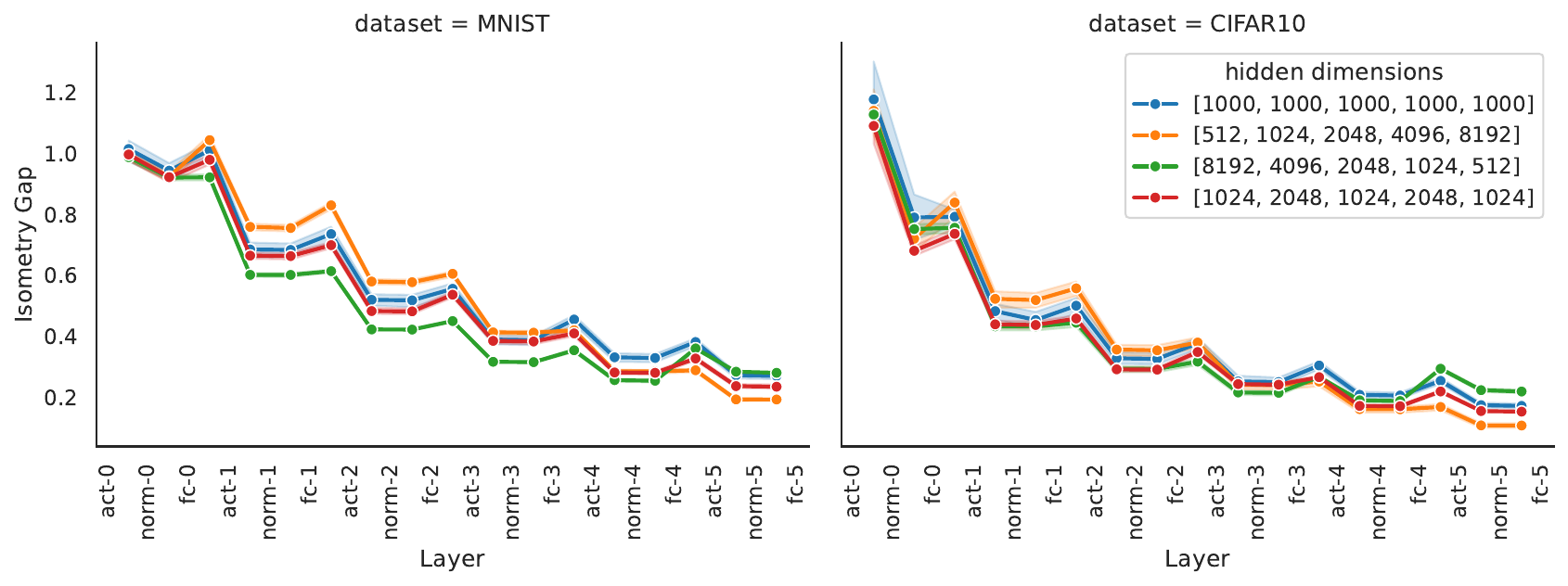}
\vspace{-.2cm}
\caption{\textbf{Theorem~\ref{thm:gram_isometry_gap} holds for variable width MLP}. Isometry gap of various layers for MLP with ReLU activation and fixed width (blue), growing width (orange), narrowing width (green), and interchanging width (red) on MNIST (left) and CIFAR10 (right) training data. The weights of MLP are random. These plots show that our central theory holds regardless of the dataset or shape of MLP widths.}
\label{fig:width}
\end{figure}

\subsection{Isometry in pre-trained large language models}\label{sec:llm}
Since our theory for normalization is not limited to initialization, we can expand our search for isometry to other architectures. Figure~\ref{fig:gpt} shows the important role of normalization in the pre-trained GPT2 network. However, we need to adjust the notion of isometry with the architecture of layer norm in a transformer in mind. In fact, the mean and standard deviation are computed over features separately for each token. Thus, to adapt the notion of isometry, we can view each token as a sample and define Gram over different tokens. Thus, isometry here quantifies the similarity between various tokens within one sample. 
As can be seen in the figure below, LayerNorm layers (shaded in red) in the last six layers of the pre-trained GPT2 increase the isometry between tokens, which is consistent with our theory of layer normalization. It is crucial that our theory holds deterministically, which extends to the pre-trained model.

One caveat in interpreting our results is that, in practice, LayerNorm layers have learnable parameters that make them deviate from our theory. It would be fruitful to study the effects of learned parameters to discern it from the role of centering and normalization for a future study. 
\begin{figure}[bht!]
\centering
\includegraphics[width=.9\textwidth]{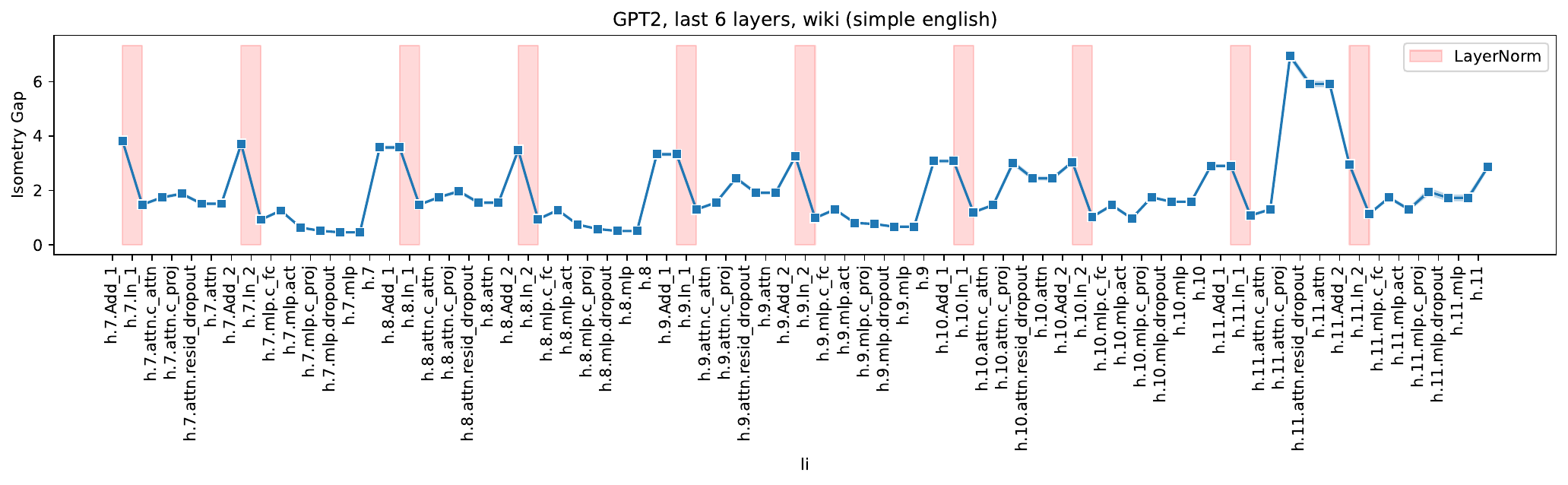}
\caption{ \textbf{Validation of corollary 2 in GPT2}. The last six transformer layers of pre-trained GPT2, LayerNorm layers (shaded; red) decrease the isometry gap. }
\label{fig:gpt}
\end{figure}

\subsection{Tracking isometry at initialization and optimization for more activations}\label{sec:training_more}
Here there are more numerical experiments related to to tracking isometry before and after training.
\begin{figure}[bht!]
\centering
\includegraphics[width=1\textwidth]{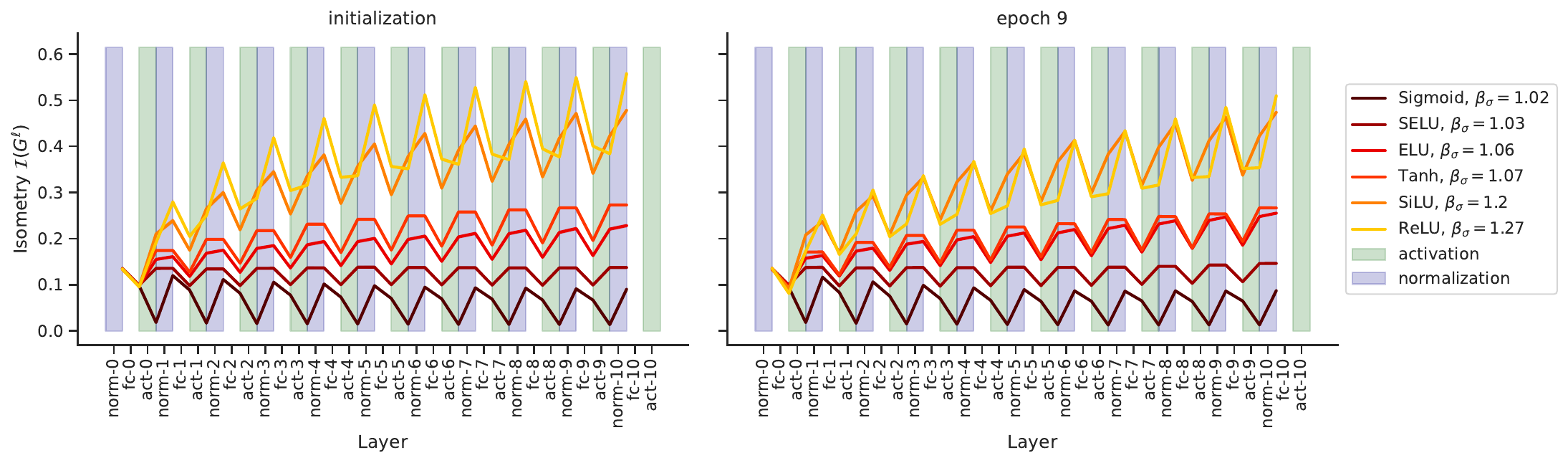}
\caption{ Validation of Corollary~\ref{cor:ln} and Theorem~\ref{thm:gram_isometry_gap} for multiple activations. }
\end{figure}

The following plot shows that isometry gap remains relatively stable.  
\begin{figure}[bht!]
\centering
\includegraphics[width=.5\textwidth]{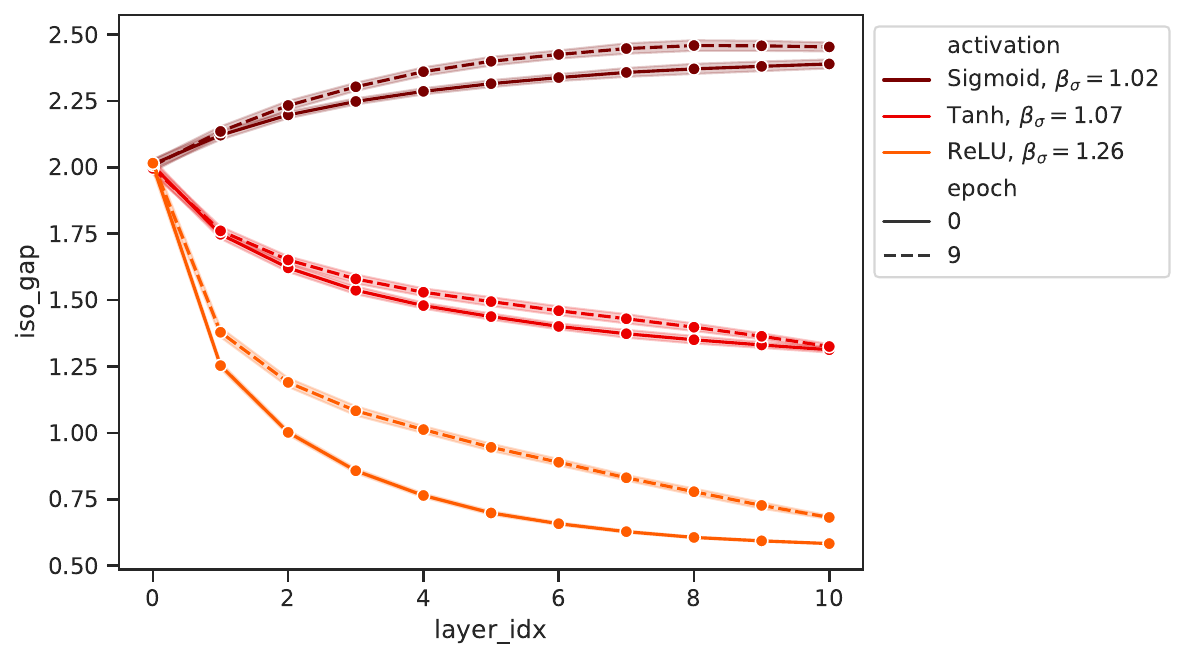}
\caption{ Isometry gap change during training is relatively small. }
\end{figure}

% Test loss is not very correlated with activation isometry strength! 
% \begin{figure}[bht!]
% \centering
% \includegraphics[width=.5\textwidth]{plots/beta0_train_loss.pdf}
% \caption{ Test loss vs. activation isometry strength.}
% \end{figure}

% % \subsection{Test loss and isometry}\label{sec:test_loss}

\end{document}